\documentclass{article}
\usepackage[accepted]{icml2026}

\usepackage{amsmath,amsfonts,bm,amssymb}

















\def\1{\bm{1}}










\DeclareMathAlphabet{\mathsfit}{\encodingdefault}{\sfdefault}{m}{sl}
\SetMathAlphabet{\mathsfit}{bold}{\encodingdefault}{\sfdefault}{bx}{n}











\newcommand{\E}{\mathbb{E}}

\newcommand{\R}{\mathbb{R}}



\usepackage[hidelinks,colorlinks=true,linkcolor=blue,citecolor=blue]{hyperref}
\usepackage{url}
\usepackage{float}
\usepackage{graphicx}
\usepackage{subcaption}
\usepackage{placeins}
\usepackage{wrapfig}
\usepackage{enumitem}
\usepackage{diagbox}
\usepackage{pgf}
\usepackage{microtype}
\usepackage{xcolor}
\usepackage[multiple]{footmisc}
\usepackage[textsize=scriptsize]{todonotes}
\usepackage{mathtools}
\usepackage{booktabs}
\usepackage{tabularx}
\usepackage[table, dvipsnames]{xcolor}
\usepackage{tikz}
\usetikzlibrary{matrix,fit,backgrounds,calc}
\usepackage{multirow}
\usepackage{physics}
\usepackage{array}
\usepackage{arydshln}
\usepackage{amsthm}
\usepackage{algorithmic}
\newcommand{\fS}{\mathcal{S}}
\newcommand{\fA}{\mathcal{A}}

\def\prob{\mathbb{P}}

\newtheorem{proposition}{Proposition}
\newtheorem{theorem}{Theorem}
\newtheorem{assumption}{Assumption}
\newtheorem{lemma}{Lemma}

\definecolor{ao}{rgb}{0.0, 0.5, 0.0}
\definecolor{bittersweet}{rgb}{0.88235294, 0.67843137, 0.00392157}
\newcolumntype{C}[1]{>{\centering\arraybackslash}p{#1}}

\newenvironment{tightitems}{
  \vspace{-8pt}
  \begin{itemize}[leftmargin=10pt, labelsep=2pt, itemsep=-2pt]
}{
  \end{itemize}
  \vspace{-8pt}
}

\usepackage{amssymb}   
\usepackage{pifont}    

\icmltitlerunning{Safe In-Context Reinforcement Learning}
\begin{document}
\twocolumn[
  \icmltitle{Safe In-Context Reinforcement Learning}
    \begin{icmlauthorlist}
    \icmlauthor{Amir Moeini$^{*}$}{uva}
    \icmlauthor{Minjae Kwon$^{*}$}{uva}
    \icmlauthor{Alper Kamil Bozkurt}{vcu}
    \icmlauthor{Yuichi Motai}{vcu}
    \icmlauthor{Rohan Chandra}{uva}
    \icmlauthor{Lu Feng}{uva}
    \icmlauthor{Shangtong Zhang}{uva}
    \icmlcorrespondingauthor{\\Amir Moeini}{amoeini@virginia.edu}
    \icmlcorrespondingauthor{\\Minjae Kwon}{mjkwon@virginia.edu}
\end{icmlauthorlist}
\icmlaffiliation{uva}{University of Virginia}
  \icmlaffiliation{vcu}{Virginia Commonwealth University}

  \icmlkeywords{In-context Reinforcement Learning, Safe Reinforcement Learning}
  \vskip 0.3in
]
\printAffiliationsAndNotice{\icmlEqualContribution}

\begin{abstract}
In-context reinforcement learning (ICRL) is an emerging RL paradigm where an agent, after pretraining, can adapt to out-of-distribution test tasks without any parameter updates, instead relying on an expanding context of interaction history. While ICRL has shown impressive generalization, safety during this adaptation process remains unexplored, limiting its applicability in real-world deployments where test-time behavior is expected to be safe. In this work, we propose SCARED: Safe Contextual Adaptive Reinforcement via Exact-penalty Dual, the first method that promotes safe adaptation of ICRL under the constrained Markov decision process framework. During the parameter-update-free adaptation process, our agent not only maximizes the reward but also keeps the accumulated cost within a user-specified safety budget. We also demonstrate that the agent actively reacts to the safety budget; with a higher safety budget, the agent behaves more aggressively, and with a lower safety budget the agent behaves more conservatively. Across challenging benchmarks, SCARED consistently enables safe and robust in-context adaptation, outperforming existing ICRL and safe meta-RL baselines.
\end{abstract}

\section{Introduction}
Reinforcement learning (RL, \citet{sutton2018reinforcement}) has achieved remarkable success in sequential decision-making, from game playing \citep{mnih2015human} to robotic control \citep{levine2016end}. These achievements rely on deep neural networks whose parameters are updated through trial-and-error interaction with the environment. However, parameter updates at test time are not always feasible for adaptation: gradient computation may be unavailable, careful hyperparameter calibration can be required, and fine-tuning in meta-learning settings risks catastrophic forgetting of previously learned behaviors \citep{Kirkpatrick2017}.

Recently,
in-context reinforcement learning (ICRL, \citet{moeini2025survey}) has emerged as an RL paradigm that achieves this trial-and-error process without any neural network parameter updates
\citep{
duan2016rl,
xu2022prompting,
lee2024supervised,
laskin2023incontext,
raparthy2023generalization,
sinii2023context,
huang2024context,
krishnamurthy2024can}.
Specifically, ICRL enables agents to adapt to new tasks at test time by conditioning on their interaction history, without updating network parameters. This allows a pretrained model to generalize to unseen environments with improving performance as more contextual experience accumulates.

Using ICRL in real-world deployments is especially desirable because, to enable adaptation, the infrastructure only needs to support the inference of the neural network.
Despite the empirical success and the promising potential of ICRL~\citep{laskin2023incontext, kirsch2023towards, dai2024incontext, dongInContextReinforcementLearning2024}, the safety of ICRL has long been overlooked, which is a necessary condition for many possible real-world applications, such as embodied AI.
In standard RL, agents are deployed on tasks they are trained for and expected to satisfy the safety constraints enforced during training. In contrast, in ICRL, agents explore and learn new tasks during test time.
Therefore, safety constraints must be satisfied not just by the final learned behavior, but also during the adaptation process \citep{xu2025efficient}.
To our knowledge, no prior work has studied the safety of ICRL, which is the gap that this paper aims to close.

We model the safe ICRL framework as a constrained Markov decision process (CMDP, \citet{altman2021constrained}), where agents receive not only a reward but also a cost signal after each action.
We envision that safe ICRL agents should maximize the rewards while keeping the cost below a user-specified cost budget even when evaluated in out-of-distribution (OOD) tasks. In addition, they should actively react to the cost budget; e.g., a smaller cost budget should induce more conservative behavior for safety, while a larger cost budget should allow more aggressive behavior for reward maximization.
Meeting these desiderata is challenging; existing ICRL methods have no mechanism to incorporate cost signals, while safe meta-RL methods use past trial history only to fine-tune parameters, which does not capture the precise details of past experiences.

The key contribution of this work is to design the first safe ICRL agent that simultaneously fulfills the aforementioned desiderata, all without any parameter updates during the adaptation to new tasks.
Specifically, we make the following contributions:

\begin{itemize}
\item \textbf{Safe ICRL Problem Formulation.}
We formalize safe ICRL under the CMDP framework, where agents adapt to new tasks at test
time without parameter updates while adjusting to safety budgets.

\item \textbf{OOD Benchmarks for Safe In-Context Adaptation.}
We introduce benchmark environments that evaluate safe exploration and
extrapolative generalization beyond interpolation regimes under safety
constraints.

\item \textbf{SCARED: Safe Contextual Adaptive Reinforcement via Exact-Penalty Dual.}
We propose SCARED, an online reinforcement pretraining method
that satisfies CMDP safety constraints.
We show SCARED outperforms the baselines, increasing cumulative reward and decreasing cost as context grows, while adapting to the safety budgets.
\end{itemize}

\section{Background}\label{sec: background}
\subsection{Markov Decision Processes}
We model the interaction with an environment for a given task as a finite-horizon MDP \cite{puterman2014markov}.
An MDP is defined by a state space $\fS$, an action space $\fA$, a reward function $r : \fS \times \fA \to \R$, a transition function $p : \fS \times \fS \times \fA \to [0, 1]$,
an initial distribution $p_0 : \fS \to [0, 1]$, and a horizon length $T$.
An agent interacts with the environment by following a policy $\pi : \fA \times \fS \to [0, 1]$.
Specifically, after starting from an initial state $S_0 \sim p_0$, at any time step $t$, the agent takes an action $A_t \sim \pi(\cdot | S_t)$, observes a reward $R_{t+1} \doteq r(S_t, A_t)$, and proceeds to a next state $S_{t+1} \sim p(\cdot | S_t, A_t)$.
This process continues until the time $T-1$, when the agent takes a final action $A_{T-1}$ and receives the last reward $R_T$.
We use $\tau \doteq (S_0, A_0, R_1, S_1, \dots, S_{T-1}, A_{T-1}, R_T)$ to denote an episode, and use $k$ to index episodes and $t$ to index time steps in an episode.
We write $S_t^k$, $A_t^k$, and $R_t^k$ to denote the state, action, and reward at time step $t$ in the $k$-th episode, respectively.
For any episode $\tau$, we define its return as $G(\tau) \doteq \sum_{t=1}^T R_t$.
The performance of a policy $\pi$ is then measured by the expected total rewards $J(\pi) \doteq \E_{\tau \sim \pi}[G(\tau)]$ obtained under $\pi$.
The fundamental task in MDPs is to obtain an optimal policy $\pi$ that maximizes $J(\pi)$, which typically requires a learning-based approach when a transition model is absent.

\subsection{Reinforcement Learning}
The goal of RL is to find an optimal policy for a given task by learning through interaction with the environment \cite{sutton2018reinforcement}. Classic approaches often construct tabular value functions that are used to explicitly map each state to an action, therefore are feasible only when the state and action spaces are discrete and small.
Recent RL techniques, combined with modern deep learning methods can handle large or continuous state–action spaces via function approximation with neural networks parameterized by weights $\theta$ \cite{wang2022deep}.
Starting from an initial policy $\pi_{\theta_0}$, deep RL methods typically update the parameters using observed transitions $(S_t, A_t, S_{t+1}, R_{t+1})$ to obtain $\pi_{\theta_{t+1}}$.
Performance can improve through these updates, which aim to reduce prediction errors via backpropagation through a loss function.
These techniques have been successful at learning high-performing policies for a particular task; however, their performance can degrade substantially across OOD tasks and environments.
In such settings, policies are often re-learned or fine-tuned, which limits the large-scale deployment of a single trained policy.

\subsection{In-Context Reinforcement Learning}
Recently, ICRL has emerged as a paradigm that allows RL agents to adapt to new tasks at test time without requiring any parameter updates after pretraining.
In ICRL, the parameters $\theta_*$ are first learned with a pretraining procedure on multiple MDPs.
The resulting policy $\pi_{\theta_*}$ parameterized by $\theta_*$ is then deployed to adapt to new tasks by taking context as additional input.
We denote the context at time $t$ within the $k$-th episode as $H_t^k$ and we write the policy as $\pi(\cdot | S_t^k, H_t^k)$ to make this dependence explicit.
A simple choice is to use the history of transitions as context (see \citet{moeini2025survey} for alternative approaches); i.e., $H_t^k \doteq (\tau_1, \dots, \tau_{k-1}, \tau_k^t)$ where $\tau_k^t \doteq  (S_0^k, A_0^k, R_1^k, S_1^k, \dots S_{t-1}^k, A_{t-1}^k, R_t^k)$ is the observed prefix of the current episode.
Adaptation to a new task in ICRL occurs as the performance of $\pi_{\theta_*}(\cdot | S_t^k,
H_t^k)$ improves with growing context while $\theta_*$ remains fixed.
This generalization capability is hypothesized to be a consequence of the neural network
implicitly implementing an RL algorithm in its forward pass, processing the context at
inference time~\citep{laskin2023incontext,kirsch2023towards}. As the context grows, this inference-time algorithm
gains access to more information, improving policy performance. This hypothesis is also
theoretically supported by \citet{lin2024transformers,wang2025transformers,wang2025towards,xie2026beyond,xie2026convergence}.
Despite the remarkable generalization of ICRL to OOD tasks, the safety of such generalization
has been overlooked.

We now discuss the pretraining methods for ICRL, which can typically be divided into supervised pretraining \citep{laskin2023incontext,lin2024transformers} and reinforcement pretraining \citep{duan2016rl,wang2016learning,grigsby2024amago,grigsby2024amago2,wang2025transformers,wang2025towards}.
Supervised pretraining employs behavior cloning with the goal of imitating an algorithm rather than a policy, and it is usually done in an offline manner.
Specifically, by running an existing RL algorithm on an MDP, we can collect a sequence of episodes, denoted as $\Xi \doteq (\tau_1, \tau_2, \dots, \tau_K)$.
We call $\Xi$ a trajectory.
The episode return $G(\tau_k)$ is expected to increase as $k$ increases; therefore, the trajectory $\Xi$ can be used to demonstrate the learning progression of the RL algorithm on the MDP.
In supervised pretraining, multiple trajectories are collected by running multiple existing RL algorithms on multiple MDPs, yielding a dataset $\qty{\Xi_i}$.
The policy $\pi_\theta$ is then trained with an imitation learning loss.
In other words, for a state $S_t^k$ from a trajectory $\Xi_i$ in the dataset, the loss for updating $\theta$ is
\begin{align}
  -\log \pi_\theta(A_t^k | S_t^k, H_t^k).
\end{align}

By optimizing $\theta$ to imitate the behavior of RL algorithms demonstrated in the dataset, the policy $\pi_\theta$ is expected to embed some RL-like mechanism into its forward pass, which is known as algorithm distillation \citep{laskin2023incontext}.
Instead of distilling existing RL algorithms,
reinforcement pretraining optimizes the network to maximize the return on many different MDPs, guiding it to discover its own RL algorithm by providing the interaction history.
Reinforcement pretraining is typically done in an online manner.
At time step $t$ within the $k$-th episode,
the loss is
\begin{align}
  \label{eq rl pretrain loss}
  Loss_{RL}(\pi_\theta(\cdot | S_t^k, H_t^k)),
\end{align}
where $Loss_{RL}$ can be any standard RL loss for standard online RL algorithms, e.g.,
\citet{grigsby2024amago} use a variant of DDPG \citep{lillicrap2015continuous}, \citet{elawadyReLICRecipe64k2024} use a variant of PPO~\citep{schulman2017proximal}, and \citet{cook2024artificial} use Muesli~\citep{hessel2021muesli}.
Since the context is typically a long sequence,
Transformers \citep{vaswani2017attention} or state space models \citep{gu_mamba_2024} are usually used to parameterize the policy \citep{laskin2023incontext,lus42023}.
The long sequence context is one of the main driving forces for the remarkable generalization capability of ICRL.

\section{Problem Formulation: Safe ICRL}
\label{sec safe icrl}
We study the safety of ICRL using the CMDP framework \citep{altman2021constrained}.
In addition to the reward function $r$, a CMDP involves a cost function $c : \fS \times \fA \to \R$ with an associated user-given budget $\delta$.
At each time step $t$, after taking an action $A_t$ at a state $S_t$, the agent additionally receives a cost $C_{t+1} \doteq c(S_t, A_t)$.
An episode in a CMDP therefore becomes
\begin{equation}\label{eq traj with cost}
    \tau = (S_0, A_0, R_1, C_1, S_1, \dots, S_{T-1}, A_{T-1}, R_T, C_T).
\end{equation}
We denote the total cost of an episode $\tau$ by $G_c(\tau) \doteq \sum_{t=1}^T C_t$, which should, in expectation, remain below the budget $\delta$.

The objective of ICRL is to pretrain a set of parameters $\theta_*$ such that the resulting policy $\pi_{\theta_*}$ maximizes the sum of expected returns for a new task at test time.
This is usually achieved by the return $G(\tau_k)$ of an episode $\tau_k$ being increased with $k$, as the policy learns more about the environment.
In safe ICRL, we additionally expect the episode cost $G_c(\tau_k)$ to decrease as $k$ increases, ideally remaining below a user-specified cost budget $\delta$.

We formalize this safe ICRL objective as follows:
\begin{align}
  \label{eq opt problem}
  \textstyle \max_{\theta} \E_{\pi_\theta}[\sum_{k=1}^K G(\tau_k)] \text{~~s.t.~~} \forall k, \E_{\pi_\theta}\qty[G_c(\tau_k)] \leq \delta.
\end{align}
Here, $K$ is the number of total test episodes, and $\tau_k$ denotes the $k$-th test episode obtained by executing the policy $\pi_\theta$ on a novel CMDP that is not accessible during pretraining. Throughout test time, the CMDP and the policy parameters $\theta$ are fixed; therefore, the policy $\pi_\theta(\cdot | S_t^k, H_t^k)$ can adapt to the CMDP only by utilizing the context, i.e., the history of test transitions.
Although we formulate our problem using CMDPs, our formulation can be used for partially observable systems without loss of generality.

\section{SCARED: Safe Contextual Adaptive Reinforcement Pretraining}
We now introduce SCARED, our online reinforcement pretraining approach for the safe ICRL problem.
SCARED enables safe in-context adaptation to unseen CMDPs by regulating episode-level costs within a given budget.
We establish a pretraining procedure that minimizes the loss~\eqref{eq rl pretrain loss} across a range of CMDPs, extending the generalization capability from reward maximization to also respecting the safety budget, solely based on context without parameter updates at test time.
Specifically, we design an algorithm to solve~\eqref{eq opt problem} by converting the primal problem to its dual form\footnote{We omit $\theta$ and write $\max_\pi f(\pi)$ instead of $\max_\theta f(\pi_\theta)$ for brevity.}:
\def\J{{\E_{\pi}\hspace{-1pt}\qty[\sum_{k=1}^K \hspace{-1pt}G(\tau_k)]}}
\def\Jck{{\E_{\pi}\hspace{-1pt}\qty[G_c(\tau_k)]}}
\def\blambda{{\boldsymbol{\lambda}}}
\vspace{-0.5em}
\begin{align}\label{eq dual}
    & \hspace{-5pt} \min_{\blambda\succeq 0}\max_\pi L(\pi,\blambda)= \notag \\
    & \hspace{-5pt} \min_{\blambda\succeq 0}\max_\pi \qty[\J {-} \sum_{k=1}^K \hspace{-1pt}\lambda_k(\Jck {-} \delta)].
\end{align}
where $\blambda \succeq 0$ denotes componentwise nonnegativity of the Lagrangian multipliers.

The maximization over $\pi$ ranges over contextual policies that condition not only on the interaction history but also on the total cost expected over the remainder of the episode. We refer to this total future cost as cost-to-go (CTG) and define it as $G_{c, t}(\tau) \doteq \sum_{i=t+1}^T C_i$. At the start of each episode, CTG is set to the budget $\delta$, i.e., $G_{c,0} = \delta$. Policies then sample actions conditioned on CTG as $A_t^k \sim \pi_\theta\!\left(\cdot \mid S_t^k, H_t^k, G_{c,t}(\tau_k)\right)$.

In safe ICRL, applying a separate constraint and Lagrange multiplier to each evaluation episode $\tau_k$ is undesirable as this requires fixing the number of episodes at the start of pretraining and leads to uneven update frequencies across multipliers. For example, if we expect $K$ evaluation episodes, we must initialize at least $K$ multipliers. If pretraining rollouts contain fewer episodes than the number of multipliers, later multipliers are updated less frequently, which destabilizes the optimization (see \autoref{fig:rl2_abl}.(c)).
Furthermore, the constraints should be symmetric since they are associated with a single cost function but simply correspond to different episodes.
These motivate the use of a single multiplier for all episodes.
However, using a single multiplier will over-penalize the policy on episodes that already satisfy the constraint.
To prevent this, we propose a modified Lagrangian function and an iterative optimization scheme.
We show this approach is empirically more stable while preserving the desirable optimization properties.

Let $g_k(\pi)\coloneqq \Jck - \delta$. We consider the following surrogate objective for the policy
\begin{align*}
    \textstyle L_\Sigma(\pi,\lambda)=\J-\lambda\sum_{k=1}^K [g_k(\pi)]_+
\end{align*}
where $[x]_+ \doteq \max\qty{x, 0}$
and perform the following iterative updates:
\begin{equation}\label{eq:iter}
\begin{aligned}
    &\textstyle \pi_{t+1}\in\operatorname*{argmax}_\pi L_\Sigma(\pi,\lambda_t) \\
    & \lambda_{t+1}=[\lambda_t+\eta\,\max_k g_k(\pi_{t+1})]_+ \quad \eta>0.
\end{aligned}
\end{equation}

We analyze the updates in \eqref{eq:iter} under the following regularity assumptions.
\begin{assumption}\label{th:a1}
The expected return
$\J$
and expected cost
$\Jck$
are bounded and continuous in $\pi$.
The constrained problem
\eqref{eq opt problem} admits an optimal feasible policy $\pi^\star$.
\end{assumption}
\begin{assumption}\label{th:a2}
[\citet{paternain2018stochastic}]
    Problem \eqref{eq opt problem} satisfies conditions ensuring zero duality gap with \eqref{eq dual} and there exist optimal Lagrange multipliers $\blambda^\star\succeq 0$ such that
\begin{align*}
\underset{\text{s.t.}\ \forall k, \Jck \leq \delta}{\max_{\pi}} \J\
=
\min_{\blambda\succeq 0}\max_\pi L(\pi, \blambda)
\end{align*}
\end{assumption}
The following theorem shows that the set of the fixed points of \eqref{eq:iter} and the set of the optimizers of \eqref{eq opt problem} are equal, meaning that the objective of our approach aligns with the safe ICRL problem statement.

\begin{theorem}\label{th:t1}[Proof in Appendix~\ref{apx:t1}]
We say a pair $(\bar\pi,\bar\lambda)$ is a fixed point of \eqref{eq:iter} if
$\bar\pi\in\arg\max_\pi L_\Sigma(\pi,\bar\lambda)$, and
for all sufficiently small $\eta>0$, $\bar\lambda=[\bar\lambda+\eta\,\max_k g_k(\bar\pi)]_+$.
Let Assumptions~\ref{th:a1} - \ref{th:a2} hold.
Then every primal-optimal policy $\pi^*$ admits a corresponding fixed point $(\pi^*,\bar\lambda)$ for any multiplier  $\bar\lambda\ge \|\blambda^\star\|_\infty$, where $\blambda^\star$ is an optimal dual solution. Conversely, every fixed point $(\bar\pi, \bar\lambda)$ of \eqref{eq:iter} is feasible and primal-optimal for \eqref{eq opt problem}.
\end{theorem}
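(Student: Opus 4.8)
The plan is to prove both directions of the claimed equivalence by treating $L_\Sigma(\cdot,\bar\lambda)$ as an exact-penalty surrogate for \eqref{eq opt problem}. Write $\mathcal{J}(\pi)\doteq\E_\pi[\sum_{k=1}^{K}G(\tau_k)]$ for the objective and $m(\pi)\doteq\max_k g_k(\pi)$, so $L_\Sigma(\pi,\lambda)=\mathcal{J}(\pi)-\lambda\sum_{k=1}^{K}[g_k(\pi)]_+$. I would first decode the $\lambda$-half of the fixed-point condition. Any fixed point has $\bar\lambda\ge 0$, being a projection onto $[0,\infty)$. If $m(\bar\pi)>0$ then $[\bar\lambda+\eta\,m(\bar\pi)]_+=\bar\lambda+\eta\,m(\bar\pi)>\bar\lambda$ for every $\eta>0$, contradicting $\bar\lambda=[\bar\lambda+\eta\,m(\bar\pi)]_+$; hence $m(\bar\pi)\le 0$, i.e.\ $\bar\pi$ is feasible for \eqref{eq opt problem}. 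If moreover $\bar\lambda>0$, then $\bar\lambda+\eta\,m(\bar\pi)>0$ for all small $\eta$, so the fixed-point equation forces $\eta\,m(\bar\pi)=0$, whence $m(\bar\pi)=0$. So every fixed point is primal feasible and satisfies the complementarity relation $\bar\lambda\,m(\bar\pi)=0$.

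\emph{Fixed point $\Rightarrow$ primal-optimal.} Feasibility of $\bar\pi$ gives $[g_k(\bar\pi)]_+=0$ for all $k$, so $L_\Sigma(\bar\pi,\bar\lambda)=\mathcal{J}(\bar\pi)$; likewise $L_\Sigma(\pi,\bar\lambda)=\mathcal{J}(\pi)$ at every feasible $\pi$. Since $\bar\pi\in\argmax_\pi L_\Sigma(\pi,\bar\lambda)$, this gives $\mathcal{J}(\bar\pi)\ge\mathcal{J}(\pi)$ for every feasible $\pi$; being itself feasible, $\bar\pi$ is primal-optimal for \eqref{eq opt problem}.

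\emph{Primal-optimal $\Rightarrow$ fixed point.} Let $\pi^*$ be primal-optimal and fix $\bar\lambda\ge\|\blambda^\star\|_\infty$. By Assumption~\ref{th:a2} the primal value equals $\max_\pi L(\pi,\blambda^\star)=\mathcal{J}(\pi^*)$, so $\mathcal{J}(\pi^*)\ge L(\pi,\blambda^\star)=\mathcal{J}(\pi)-\sum_{k=1}^{K}\lambda_k^\star g_k(\pi)$ for every $\pi$. Because each $\lambda_k^\star\in[0,\|\blambda^\star\|_\infty]$ and $g_k(\pi)\le[g_k(\pi)]_+$ with $[g_k(\pi)]_+\ge 0$, we get $\sum_k\lambda_k^\star g_k(\pi)\le\bar\lambda\sum_k[g_k(\pi)]_+$, upgrading the bound to $\mathcal{J}(\pi^*)\ge\mathcal{J}(\pi)-\bar\lambda\sum_k[g_k(\pi)]_+=L_\Sigma(\pi,\bar\lambda)$ for all $\pi$. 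Since $\pi^*$ is feasible, $L_\Sigma(\pi^*,\bar\lambda)=\mathcal{J}(\pi^*)$, so $\pi^*$ attains $\sup_\pi L_\Sigma(\pi,\bar\lambda)$, which is the first fixed-point condition. For the $\lambda$-condition, strong duality yields complementary slackness: $\mathcal{J}(\pi^*)=\max_\pi L(\pi,\blambda^\star)\ge L(\pi^*,\blambda^\star)$ forces $\sum_k\lambda_k^\star g_k(\pi^*)=0$, hence $\lambda_k^\star g_k(\pi^*)=0$ term by term, so $m(\pi^*)=0$ whenever some $\lambda_k^\star>0$, in which case $[\bar\lambda+\eta\,m(\pi^*)]_+=\bar\lambda$ for all $\eta$; in the remaining degenerate case every constraint is strictly slack, forcing $\blambda^\star=0$, and the choice $\bar\lambda=0$ gives $[\eta\,m(\pi^*)]_+=0=\bar\lambda$. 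In all cases $(\pi^*,\bar\lambda)$ is a fixed point.

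\textbf{Main obstacle.} The substantive step is the converse: turning the vector-valued strong-duality inequality into a \emph{scalar} exact-penalty bound — trading $\sum_k\lambda_k^\star g_k$ for $\bar\lambda\sum_k[g_k]_+$ — and then checking that the scalar multiplier update in \eqref{eq:iter}, which reacts only to $\max_k g_k(\pi)$ rather than to the individual constraints, stays consistent with that reformulation; this is exactly where the threshold $\bar\lambda\ge\|\blambda^\star\|_\infty$ and the clipping $[\,\cdot\,]_+$ inside $L_\Sigma$ do the work. Assumption~\ref{th:a1} is used only to keep $\mathcal{J}$ and the costs finite and the relevant maximizers attained; the genuine content is the exact-penalty equivalence together with the complementarity bookkeeping above.
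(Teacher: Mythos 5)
Your proof is correct and follows essentially the same route as the paper's: you inline the paper's Lemma~\ref{th:l1} on the projected multiplier update to get feasibility and complementarity, use the same "penalty vanishes on the feasible set" argument for the forward direction, and establish the converse via the identical exact-penalty bound $\sum_k\lambda_k^\star g_k(\pi)\le\bar\lambda\sum_k[g_k(\pi)]_+$ followed by complementary slackness for multiplier stationarity. The only (shared) caveat is that in the degenerate case where all constraints are strictly slack at $\pi^*$, both your argument and the paper's only verify stationarity at $\bar\lambda=0$ rather than at every $\bar\lambda\ge\|\blambda^\star\|_\infty$ as the statement literally claims.
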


Our update~\eqref{eq:iter} resembles an exact penalty convex optimization technique when $\lambda \ge \|\blambda^\star\|_\infty$~\citep{nocedal1999numerical}, hence the naming.
We implement it with an actor-critic approach following \citet{grigsby2024amago}; see Algorithm~\ref{alg: scared} for pseudocode.
Separate critics estimate the reward Q-function $Q_{\theta_v}$ and cost Q-function $Q^c_{\theta_c}$, trained with TD targets from target networks.
The actor maximizes $Q_{\theta_v}$ while penalizing $Q^c_{\theta_c}$ for episodes exceeding the budget.

\section{Experiments}\label{sec exp}
We now empirically investigate the proposed safe ICRL algorithms, aiming to answer the following research questions:
\vspace{-0.5em}
\begin{tightitems}
    \item[(i)] Does SCARED generalize to both out-of-distribution
tasks and unseen in-distribution (ID) tasks in safety-constrained
environments?

\item[(ii)] How effectively does SCARED achieve flexible reward-cost tradeoffs
under varying safety budgets at test time, without parameter updates?
\end{tightitems}

\subsection{Baselines}
We consider two baseline classes: algorithm distillation for safe RL, which encodes safety into the policy's forward pass without test-time parameter updates, and safe meta-RL methods, which adapt via parameter updates at evaluation but do not use in-context interaction histories for action selection.

\textbf{Algorithm Distillation for Safe RL.}
We adopt algorithm distillation for safe RL (safe AD) as a strong baseline for safe ICRL. Following algorithm distillation~\citep{laskin2023incontext},
we collect a dataset $\{\Xi_i\}$ by executing existing safe RL algorithms
(e.g.,~\citep{Chen2018, ray2019benchmarking})
across safety-constrained environments
(e.g.,~\citep{ji2023safety, robustrl2024}).
As a result, each trajectory $\Xi_i$ consists of multiple episodes and each episode contains both the reward and the cost (cf.~\eqref{eq traj with cost}).

To ensure a fair and competitive baseline, the policy is conditioned on the full in-context interaction history $H^k_t$, which includes per-step reward and cost signals, as well as on CTG and return-to-go (RTG), a target specifying the desired future reward that steers the policy toward a desired reward level~\citep{liu2023cdt}.
The raw reward and cost information in $H^k_t$ enables algorithm distillation~\citep{laskin2023incontext}, allowing the model to imitate adaptation behavior present in the offline trajectories. In contrast, RTG and CTG serve a distinct role as conditioning variables that bias action selection toward desired reward-cost tradeoffs at test time.
Formally, given a trajectory $\Xi=(\tau_1,\dots,\tau_K)$, the safe AD objective at state $S_t^k$ is
$-\log \pi_\theta\!\left(A_t^k \mid S_t^k, H_t^k, G_t(\tau_k), G_{c,t}(\tau_k)\right)$.
During pretraining, RTG and CTG are available since episodes are complete.
At test time, they are replaced by user-specified targets, allowing control over the reward-cost tradeoffs without parameter updates.
Safe AD relies on a large replay buffer of logged learning histories, which must be subsampled due to context-length constraints and labeled with RTG to distinguish between early, intermediate, and expert phases of training~\citep{dai2024incontext}. We also evaluate a noise-based variant of Safe AD~\citep{zisman2024emergence}; however, training on optimal trajectories with noise fails to provide the behavioral diversity required for OOD adaptation, highlighting the learning mechanism in our Safe AD baseline for reliable in-context safety adaptation (Appendix~\ref{apx: ad vs ad-eps}).

\textbf{Safe Meta RL.}
We compare our approach against gradient-based safe meta RL methods, specifically MAML with penalty~\citep{finn2017model}, and SafeMeta~\citep{xu2025efficient}. MAML with penalty optimizes a model-agnostic initialization via cost-weighted penalties that can be adapted to new tasks with a few gradient steps. SafeMeta combines a closed-form, one-step safe policy adaptation method with a Hessian-free meta-training algorithm to guarantee anytime safety. By design, these methods adapt through per-task parameter updates rather than conditioning on cross-episode interaction histories.

\subsection{Experimental Setup}
\paragraph{Safe ICRL Benchmarks.}
We evaluate our approach on three constrained environments, categorized into two complementary generalization regimes: (i) structural OOD tasks and (ii) unseen ID tasks. First, we use SafeDarkRoom and SafeDarkMujoco (Point and Car) to evaluate adaptation to OOD tasks with shifted goal and obstacle configurations. Second, we evaluate on SafeVelocity (HalfCheetah and Ant), representing the unseen ID task regime commonly used in meta-RL benchmarks. We discuss the difficulty gap between these two categories in Section~\ref{sec ood}.

The first two environments are modified from DarkRoom~\citep{laskin2023incontext} and SafetyGym benchmark~\citep{ji2023safety} respectively.
SafeDarkRoom is a grid-world setting where the agent can only perceive its own position and lacks visibility of the goal or obstacle locations. Rewards are sparse, with a positive reward obtained upon reaching the goal in the map. Costs arise from multiple obstacles scattered across the environment. For instance, 25 obstacles in a 9$\times$9 grid map, which incur a cost of 1 each time the agent steps on one of the obstacles. To succeed, the agent must learn obstacle positions based on encountered cost signals, introducing additional complexity compared to goal-only environments due to the presence of multiple hazards.

SafeDarkMujoco operates in continuous space with MuJoCo simulation~\citep{todorov2012mujoco}, with a setup analogous to SafeDarkRoom. The robot senses its internal physical states, such as velocity, acceleration, position, and rotation angle, while range-based sensing (e.g., lidar) is unavailable, preventing direct detection of obstacles and the goal. Consequently, the agent must develop exploratory behaviors to identify obstacles and goals, akin to the SafeDarkRoom design, while navigating to the goal and minimizing collisions by learning from previous collisions in the context.
SafeVelocity follows the design of \citet{ji2023safety}; however, we vary the target velocities
and hold out a set of unseen velocities for evaluation. Such velocity-conditioned setups are often considered within meta-RL frameworks \citep{finn2017model}, where different target velocities are sampled from a fixed range (e.g., $[0, 1]$). We thus consider SafeVelocity tasks as unseen ID tasks, which are more straightforward than OOD generalization.

\textbf{Training Setup.}
For OOD task environments, all algorithms are trained on
source tasks with center-oriented obstacles and goals.
For unseen ID task environments, we hold out a set of target velocities multipliers uniformly sampled from $[0.5, 1.0]$ in increments of $0.05$ during training and evaluating performance on these held-out velocities at test time.

Safe AD is trained on trajectories generated using
PPO-Lagrangian~\citep{ray2019benchmarking}. SCARED is pretrained online with
Algorithm~\ref{alg: scared}.
All methods are trained to convergence. We report the training budget for each
method, in particular the number of environment steps, in
Appendix~\ref{apx: training details}.
For safe AD and safe meta-RL baselines, we perform hyperparameter optimization
to ensure fair comparison across methods.
Additional implementation details and used hyperparameters are provided in
Appendix~\ref{apx: training details}.

\textbf{Evaluation Setup.}
To answer the aforementioned research questions, we use a consistent setup for CTG and RTG during evaluation.
For SCARED, we evaluate robustness to safety budgets by conditioning on CTG values sampled uniformly from $[1,10]$ in SafeDarkRoom, $[10,50]$ in SafeDarkMujoco, and $[0,5]$ in SafeVelocity, reporting performance averaged over 100 distinct CTG targets. This validates that pretraining with diverse CTG targets enables budget-conditioned generalization to OOD tasks. The same intervals and sample counts are used for the cost budgets of SafeMeta and MAML with penalty.
For Safe AD, we condition the policy on paired target values.
Specifically, we initialize RTG at $0.1$ and CTG at $0.0$, and then adjust RTG
according to
$\mathrm{RTG} = \max\left\{0.1, \frac{\mathrm{CTG}}{\mathrm{CTG}_{\max}}\right\}$
as CTG increases from $0$ to $\mathrm{CTG}_{\max}$, where
$\mathrm{CTG}_{\max} = 10$ for SafeDarkRoom, $50$ for SafeDarkMujoco, and $5$ for
SafetyVelocity. Performance is reported as an average over 100 paired
RTG-CTG targets.

\subsection{Measuring OOD generalization} \label{sec ood}
Goal discovery-oriented environments such as DarkRoom are widely used in previous ICRL works \citep{laskin2023incontext, zisman2024emergence, son2025distilling}.
In those works,
the goals are randomly spawned over the map.
Each goal corresponds to a new task.
By ensuring the goals used in pretraining do not overlap with the goals used in testing,
they ensure the test task is unseen during pretraining and can thus evaluate the generalization capability of their ICRL agents.
However,
such generalization can be achieved by interpolation \citep{kirkSurveyZeroshotGeneralisation2023}.
For example, if the goals are spawned only on the black squares of a chessboard during pretraining, then an unseen goal location on one of the white squares can be viewed as an interpolation of the goals in the pretraining tasks.
The agent thus may navigate to this unseen test goal by interpolating policies learned from pretraining.
In other words,
although those evaluation tasks are \emph{unseen} during pretraining,
it is not clear whether those tasks can fully demonstrate the challenges of OOD generalization.

To measure the OOD generalization of our proposed safe ICRL methods,
we employ a challenging distance-based obstacle and goal spawning strategy: center-oriented for pretraining and edge-oriented for evaluation. The agent spawns at the map center, with obstacles and the goal distributed proportionally closer to this center during pretraining.
During evaluation, obstacles and goals follow an edge-oriented distribution. This approach applies similarly to the SafeDarkRoom and SafeDarkMujoco environments. We argue that this setup is more challenging than those in prior studies~\citep{laskin2023incontext, zisman2024emergence, son2025distilling}, as the agent must learn extrapolation rather than interpolation.
More precisely,
during pretraining,
the grid position $(i, j)$ has an obstacle with probability
$\prob_{\text{train}}((i,j)) \propto e^{-\alpha d((i,j), c)}$, where $c$ is the map center, $d(\cdot,\cdot)$ denotes Euclidean distance, and $\alpha > 0$ to promote central density.

To generate an evaluation task,
the grid position $(i, j)$ has an obstacle with probability
$\prob_{\text{test}}((i,j)) \propto e^{\alpha d((i,j), c)}$ with $\alpha > 0$ to favor edge density.
The goals are generated similarly.
This distributional shift is demonstrably OOD,
both visually (Appendix~\ref{apx: training details}) and mathematically.
\begin{proposition}[Proof in Appendix~\ref{apx: prop}]\label{thm: prop1}
The total variation distance satisfies $\lim_{\alpha \rightarrow \infty} \delta(\prob_{\text{train}}, \prob_{\text{test}}) = \lim_{\alpha \rightarrow \infty}\frac{1}{2} \sum_O |\prob_{\text {train }}(O)-\prob_{\text {test }}(O)|=1$ (maximum separation).  Similarly, the KL divergence holds $\lim_{\alpha \rightarrow \infty} D_{\mathrm{KL}}\left(\prob_{\text {train }} \| \prob_{\text {test }}\right)=\infty$.
\end{proposition}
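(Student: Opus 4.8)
The plan is to view $\prob_{\text{train}}$ and $\prob_{\text{test}}$ as Gibbs distributions on the finite set of admissible grid cells and exploit the elementary fact that, as $\alpha\to\infty$, each one concentrates all its mass on the cells extremizing the distance-to-center $d(\cdot,c)$. Write $N$ for the number of admissible cells, $d_{\min}\doteq\min_{(i,j)}d((i,j),c)$ and $d_{\max}\doteq\max_{(i,j)}d((i,j),c)$, and let $S_{\min},S_{\max}$ be the sets of cells attaining these two values. On any nondegenerate grid---e.g., the agent's central spawn cell together with a corner cell---we have $d_{\min}<d_{\max}$, hence $S_{\min}\cap S_{\max}=\emptyset$. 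The one calculation everything rests on is a Laplace-type sandwich of the partition functions $Z_{\text{train}}(\alpha)\doteq\sum_{(i,j)}e^{-\alpha d((i,j),c)}$ and $Z_{\text{test}}(\alpha)\doteq\sum_{(i,j)}e^{\alpha d((i,j),c)}$ between their dominant summand and $N$ times it: $e^{-\alpha d_{\min}}\le Z_{\text{train}}(\alpha)\le N e^{-\alpha d_{\min}}$ and $e^{\alpha d_{\max}}\le Z_{\text{test}}(\alpha)\le N e^{\alpha d_{\max}}$.

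For the total-variation limit I would first show $\prob_{\text{train}}(S_{\max})\to 0$ and $\prob_{\text{test}}(S_{\max})\to 1$. The former is immediate from $\prob_{\text{train}}(S_{\max})\le |S_{\max}|\,e^{-\alpha d_{\max}}/e^{-\alpha d_{\min}}=|S_{\max}|\,e^{-\alpha(d_{\max}-d_{\min})}$. For the latter, letting $d_2<d_{\max}$ be the second-largest value of $d(\cdot,c)$, I bound $\prob_{\text{test}}(S_{\max}^c)\le N e^{\alpha d_2}/e^{\alpha d_{\max}}=N e^{-\alpha(d_{\max}-d_2)}\to 0$. Since total variation dominates the discrepancy on any fixed event, $\delta(\prob_{\text{train}},\prob_{\text{test}})\ge \prob_{\text{test}}(S_{\max})-\prob_{\text{train}}(S_{\max})\to 1$, and $\delta\le 1$ always, which gives the first claim.

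For the KL limit I would use the exact pointwise identity $\log\frac{\prob_{\text{train}}((i,j))}{\prob_{\text{test}}((i,j))}=\log\frac{Z_{\text{test}}(\alpha)}{Z_{\text{train}}(\alpha)}-2\alpha\, d((i,j),c)$, so that $\KL(\prob_{\text{train}}\,\|\,\prob_{\text{test}})=\log\frac{Z_{\text{test}}(\alpha)}{Z_{\text{train}}(\alpha)}-2\alpha\,\E_{\prob_{\text{train}}}[d((\cdot,\cdot),c)]$. The sandwich gives $\log\frac{Z_{\text{test}}(\alpha)}{Z_{\text{train}}(\alpha)}\ge \alpha(d_{\max}+d_{\min})-\log N$, while $\E_{\prob_{\text{train}}}[d(\cdot,c)]\to d_{\min}$ by the same concentration used above (the excess mass off $S_{\min}$ decays like $e^{-\alpha(d_2'-d_{\min})}$, with $d_2'$ the second-smallest distance). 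Hence, for any $\epsilon\in\big(0,(d_{\max}-d_{\min})/2\big)$ and all sufficiently large $\alpha$, $\KL(\prob_{\text{train}}\,\|\,\prob_{\text{test}})\ge \alpha\,(d_{\max}-d_{\min}-2\epsilon)-\log N\to\infty$.

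I do not expect a genuine obstacle here; the whole argument is a finite-state Laplace estimate. The only points needing care are: (i) the nondegeneracy $d_{\min}<d_{\max}$, without which both laws are uniform and the stated limits are false; and (ii) if an obstacle pattern is really a joint configuration of several cells rather than a single cell, observing that the product (i.i.d.\ placement) structure preserves the concentration---$\prob_{\text{train}}$ (resp.\ $\prob_{\text{test}}$) still places all mass on all-central (resp.\ all-peripheral) configurations, which remain disjoint---so the same two estimates carry over verbatim with $N$ replaced by the number of configurations. The identical computation handles the goal-location distributions.
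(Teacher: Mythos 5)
Your proof is correct, and it rests on the same core idea as the paper's: as $\alpha\to\infty$ the two Gibbs measures concentrate on disjoint extremal sets (the center for $\prob_{\text{train}}$, the maximal-distance cells for $\prob_{\text{test}}$). The execution differs in two useful ways. For total variation, the paper argues that the \emph{limiting} supports are disjoint and then invokes the disjoint-support identity $\frac{1}{2}\sum_O(|\prob_{\text{train}}(O)|+|\prob_{\text{test}}(O)|)=1$; since at every finite $\alpha$ both distributions have full support, this implicitly requires passing the limit through the TV computation (harmless on a finite space, but left unsaid). Your bound $\delta(\prob_{\text{train}},\prob_{\text{test}})\ge \prob_{\text{test}}(S_{\max})-\prob_{\text{train}}(S_{\max})$ via a fixed distinguishing event avoids that step entirely and yields an explicit exponential rate. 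For the KL divergence, the paper takes termwise limits of the sum (using $\alpha e^{-\alpha c}\to 0$ off the center and divergence of the center term), whereas you use the exact identity $\KL=\log(Z_{\text{test}}/Z_{\text{train}})-2\alpha\,\E_{\prob_{\text{train}}}[d(\cdot,c)]$ with a partition-function sandwich, giving a clean linear-in-$\alpha$ lower bound $\alpha(d_{\max}-d_{\min}-2\epsilon)-\log N$. Your treatment is also slightly more general (it does not assume the center is itself a grid cell with $d_{\min}=0$, as the paper's calculation does), and your nondegeneracy caveat $d_{\min}<d_{\max}$ is a real hypothesis the paper leaves implicit. Both arguments are valid; yours is the more quantitative and self-contained of the two.
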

Proposition~\ref{thm: prop1} shows that as $\alpha \to \infty$, the overlap between the training and test distributions vanishes. As a result, the model must generalize to inputs far outside the regions observed during training.

\subsection{Analysis}
\paragraph{Question (i).}
\begin{figure*}[t]
    \centering
    \includegraphics[width=1.0\linewidth]{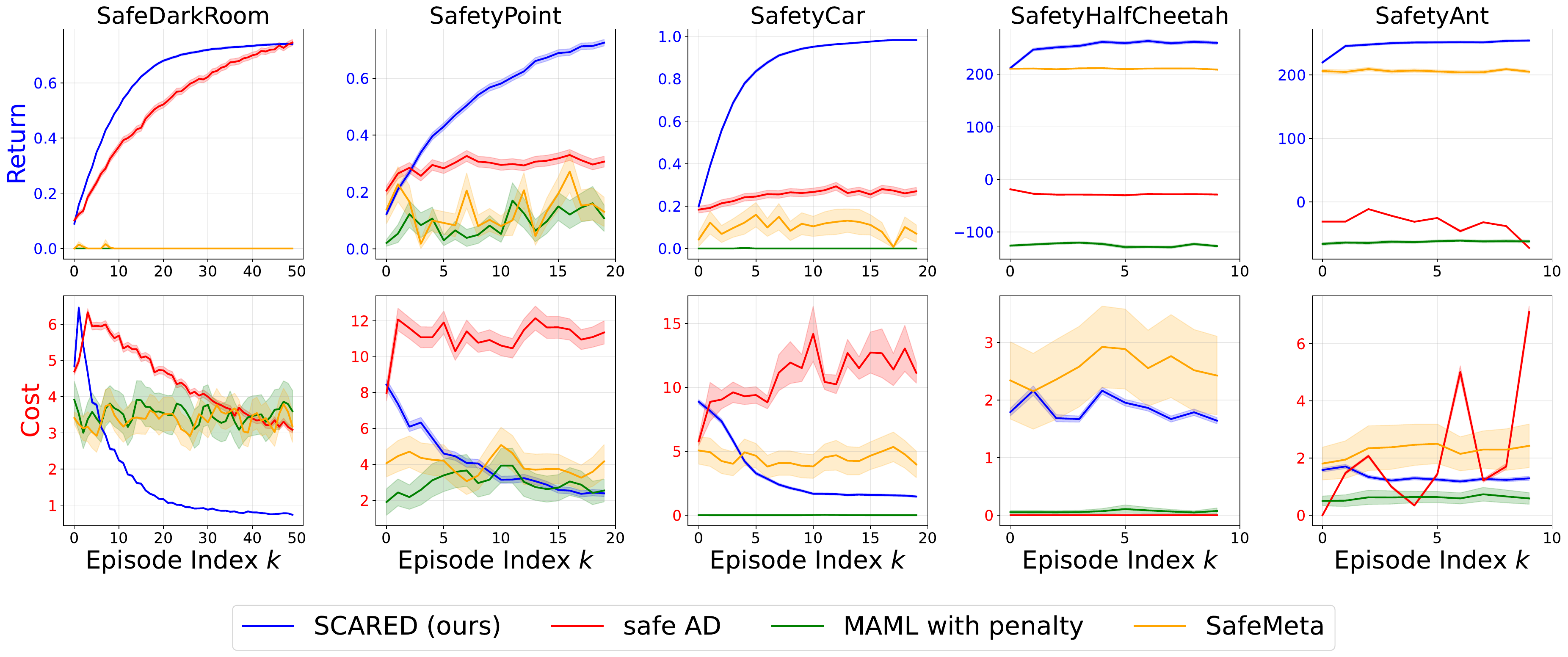}
    \caption{
    \textbf{Evaluation performance under increasing in-context episodes.}
Curves are averaged over environment-specific OOD evaluation tasks; shaded regions denote standard errors. The $x$-axis shows the episode index $k$, and the $y$-axis reports the episode return $G(\tau_k)$ (top) and episode cost $G_c(\tau_k)$ (bottom).
SCARED shows consistent improvement in return and reduction in cost as the in-context interaction grows across all environments, and outperforms all baselines.
Safe AD shows environment-dependent performance, failing in the SafeVelocity domain and performing competitively in others, but does not achieve the same performance as SCARED. Safe meta-RL algorithms fail to achieve safe learning behavior due to a lack of in-context interaction, despite updating parameters during evaluation. }
\label{fig: rq1}
\end{figure*}
We now demonstrate emergent safe learning behaviors in the test environments, thus giving an affirmative answer to Question (i). For OOD evaluation, we use edge-oriented obstacles and goals in SafeDarkRoom and SafeDarkMujoco. For unseen ID tasks, we evaluate on SafeVelocity (HalfCheetah and Ant), where target velocities are unseen during training. During evaluation, agents interact with each task for multiple episodes without updating parameters (except for meta-RL baselines, which adapt via parameter updates).

Figure~\ref{fig: rq1} shows SCARED consistently outperforms all baselines by
learning safe behavior through in-context interaction, showing increasing
episode return and decreasing episode cost as the interaction history grows
across all evaluated environments.

Safe AD performs competitively in SafeDarkRoom, showing a similar
learning pattern but with slower and weaker adaptation than SCARED.
However, its performance degrades as environment complexity increases. In SafetyAnt, episode cost rises while return declines through in-context episodes, and in SafeDarkMujoco, although return improves, cost does not decrease, failing safe adaptation.

Among safe RL baselines, MAML with penalty fails to achieve safe and high-performing
adaptation across the evaluated environments. In contrast, SafeMeta achieves higher returns in SafetyVelocity domains but does not reduce safety violations, even when allowed to update parameters during evaluation. This
suggests that parameter adaptation alone, without cross-episode interaction histories, is insufficient to regulate safety during online exploration.

\textbf{Question (ii).}
\begin{figure*}[t]
    \centering
    \includegraphics[width=1\linewidth]{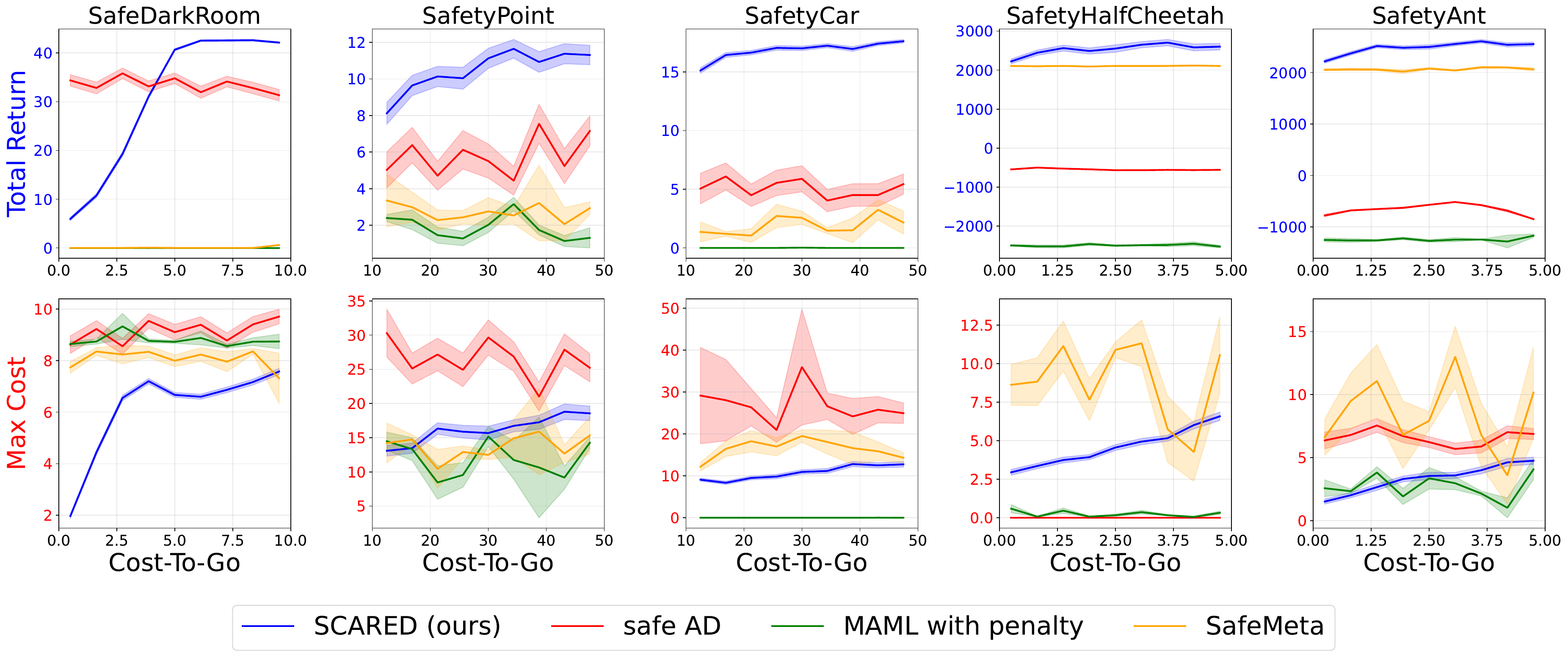}
    \caption{
\textbf{Evaluation performance under varying CTG targets.}
Results are averaged over environment-specific OOD evaluation tasks; shaded regions denote standard errors.
The $x$-axis shows the target CTG, and the $y$-axis reports the total return $\sum_{k=1}^K G(\tau_k)$ (top) and the maximum episode cost $\max_{k \in [1,K]} G_c(\tau_k)$ (bottom).
SCARED shows a clear pattern: as the cost budget increases, it achieves higher total return across all environments. Other algorithms fail to present this budget-adjustment behavior.
}
\label{fig: rq2}
\end{figure*}
We demonstrate that the behavior of the pretrained policy can be adjusted at test
time by varying CTG, without updating parameters.
Intuitively, higher CTG values correspond to greater cost tolerance and enable more
aggressive exploration and higher return. Conversely, lower CTG values induce more
conservative, safety-focused behavior, potentially at the expense of suboptimal rewards.
To study this, we use the same range of CTG introduced before.

SCARED controls CTG exclusively and pursues the maximal achievable
return subject to the specified cost budget, avoiding the need to reason about
the feasibility of arbitrary RTG-CTG pairs, which is known to be challenging
in constrained settings~\citep{liu2023cdt}.
Figure~\ref{fig: rq2} shows that increasing the cost budget consistently leads
to higher return, indicating that SCARED learns to map the cost budget to an exploration strategy during in-context adaptation. Notably, maximum episode costs track the CTG target closely, with nearly all episodes respecting the specified budget.

In contrast, Safe AD fails to adapt
consistently to varying CTG values, showing inconsistent behavior. For example, in SafeDarkRoom, Safe AD achieves relatively high total return at low CTG targets but shows a downward trend as the cost budget increases, failing to control return-cost tradeoffs.
Similarly, safe meta-RL baselines do not show comparable budget-adjustment
behavior. Regardless of specified CTG targets, their performance does not
vary with the cost budgets, suggesting that parameter adaptation
alone does not provide test-time control over reward-cost
tradeoffs.

\textbf{Ablations.}
We conduct ablations on SCARED and Safe AD over shared factors (context length, model size) and method-specific factors (safe AD dataset size), with results reported in Appendix~\ref{apx: main_abl}, revealing distinct sensitivities to these factors.
SCARED is largely unaffected by model size, whereas model size significantly influences safe AD performance. For context length, SCARED performs better with longer sequences, while safe AD performs worse. Conversely, safe AD excels with shorter context lengths, while SCARED performs poorly. These results suggest that long-term credit assignment is harder in offline RL due to dataset constraints, while online learning benefits from direct environment interaction. We confirm that Safe AD is highly sensitive to dataset size: with limited data, it fails to learn and shows random behavior on OOD tasks.

\section{Related Works}

\textbf{ICRL.}
Learning to improve on a new MDP by interacting with it without parameter updates was first studied in meta-RL \citep{duan2016rl,wang2016learning}.
See \citet{beck2023survey} for a detailed survey of meta-RL.
In general, zero-parameter-update generalization in meta-RL is limited, as most methods rely on task identification—matching evaluation tasks to similar pretraining tasks, i.e.,
identifying pretraining tasks that are similar to the evaluation task and acting as if the evaluation task were the pretraining tasks.
\citet{laskin2023incontext} coined the term ICRL and demonstrated strong generalization to tasks far from the pretraining distribution, spurring growing interest in the area.

Variations of algorithm distillation have been proposed for efficient in-context reinforcement learning, including the Decision Pretrained Transformer~\citep{lee2024supervised} and Algorithm Distillation with Noise~\citep{zisman2024emergence}. Both approaches aim to efficiently train transformer models for in-context learning using optimal policies. LLMs have been shown to exhibit ICRL as an emergent capability, enabling them to keep improving on tasks when previous attempts are given along with a reward in the context \citep{song2026reward}.
See \citet{moeini2025survey} for a detailed survey of ICRL.
However, none of these works have investigated safety within the context of ICRL.

\textbf{Safe RL. }
Safe RL methods commonly adopt CMDP formulations to enforce compliance with safety constraints during exploration and policy optimization~\citep{garcia2015comprehensive, gu2022review, wachi2024survey}. Constrained Policy Optimization (CPO) serves as a foundational algorithm in this area, balancing rewards with safety~\citep{achiam2017constrained, wachi2020safe}. Building on CPO, subsequent works enhance theoretical guarantees and scalability through primal-dual formulations and efficient projection-based updates~\citep{Chen2018, xu2021crpo, yang2022cup}. Shielding with function encoders and conformal prediction has been proposed to handle unseen OOD environments~\citep{kwon2025runtimesafetyadaptiveshielding}. However, this work emphasizes runtime adaptation rather than learning algorithms through interaction.
From the offline RL perspective, Constrained Decision Transformer~\citep{liu2023cdt} uses Transformer architecture for safe RL. However, existing approaches enforce safety constraints only when the evaluation task closely matches the pretraining tasks. In contrast, our safe ICRL framework satisfies safety constraints even on evaluation tasks that differ substantially from the pretraining distribution.

\textbf{Safe Meta RL.}
Safe meta RL enables fast adaptation to new tasks, enforcing safety constraints. For example, \citet{luo2021mesa} propose a three-phase strategy that meta-learns a safety critic across environments, adapts it to new tasks using limited offline data, and use it with a recovery policy to reduce constraint violations during learning. \citet{khattar2023a} provides task-averaged regret bounds for rewards and constraints via gradient-based meta-learning.
More recently, \citet{guan2024cost} proposes a cost-aware context encoder that uses supervised cost relabeling and contrastive learning to infer tasks based on safety constraints. \citet{xu2025efficient} proposes a safe meta-RL algorithm that integrates a closed-form, one-step safe policy adaptation mechanism to provide anytime safety guarantees.
Unlike safe meta-RL methods, which rely on parameter updates to achieve safe
adaptation during evaluation, safe ICRL enables adaptation without test-time
parameter updates.

\section{Conclusion}
This work pioneers the study of safe ICRL, where agents must adapt to new and potentially OOD tasks without parameter updates while adhering to safety constraints.
We show that existing approaches, including algorithm distillation for safe RL, and safe meta-RL baselines, are limited in this setting, as they lack mechanisms for regulating safety through in-context interaction histories and for adjusting to cost budgets during adaptation.

To address these limitations, we introduced SCARED, a reinforcement pretraining approach for safe in-context adaptation under the CMDP framework.
SCARED enables agents to balance reward maximization and cost minimization during adaptation by regulating episode-level safety constraints and adjusting behavior in response to different cost budgets.
Theoretically, we prove that fixed points of SCARED optimization satisfy strict safety constraints while maximizing rewards.
To rigorously test OOD generalization, we introduce challenging benchmarks for safe ICRL (Proposition~\ref{thm: prop1}), addressing a gap in prior evaluations~\citep{laskin2023incontext, zisman2024emergence, son2025distilling}.
Through these challenging benchmarks, we confirm that our pretrained safe ICRL agents adapt to not only unseen ID but also OOD evaluation tasks while respecting safety constraints.
We envision this work guiding the development of robust, generalizable, and safety-aware in-context learning methods for real-world applications.

\section*{Impact Statement}
This work introduces the first study of safety in ICRL, a paradigm in which agents adapt to new tasks at deployment time without parameter updates. By providing a framework for enforcing user-specified safety budgets during test-time adaptation, this work is relevant to real-world applications such as robotics and embodied AI, where retraining is impractical and unsafe exploration can have real consequences.

Our method applies to problems that can first be modeled as CMDPs, which requires available cost signals and an appropriate CTG budget; selecting this budget may require domain knowledge or calibration. The proposed framework does not eliminate risk, particularly in environments with unmodeled hazards or adversarial conditions, and should be complemented by human oversight and additional safety mechanisms. We release our code, benchmarks, and evaluation protocols\footnote{\url{https://github.com/moeiniamir/SCARED}} to support transparent and reproducible research on safe adaptation under distribution shift.

\section*{Acknowledgments}
This work is supported in part by the US National Science Foundation under the awards CCF-1942836, III-2128019, SLES-2331904, and CAREER-2442098, the Commonwealth Cyber Initiative's Central Virginia Node under the award VV-1Q26-001, a Cisco Faculty Research Award, and an Nvidia academic grant program award.
\bibliography{bibliography}

@inproceedings{wachi2024survey,
    author = "Wachi, Akifumi and Shen, Xun and Sui, Yanan",
    title = "A Survey of Constraint Formulations in Safe Reinforcement Learning",
    booktitle = "Proceedings of the International Joint Conference on Artificial Intelligence",
    year = "2024"
}

@article{moeini2025survey,
    author = "Moeini, Amir and Wang, Jiuqi and Beck, Jacob and Blaser, Ethan and Whiteson, Shimon and Chandra, Rohan and Zhang, Shangtong",
    year = "2025",
    journal = "arXiv Preprint",
    title = "A Survey of In-Context Reinforcement Learning"
}

@article{kirkSurveyZeroshotGeneralisation2023,
    author = {Kirk, Robert and Zhang, Amy and Grefenstette, Edward and Rockt{\"a}schel, Tim},
    title = "A Survey of Zero-shot Generalisation in Deep Reinforcement Learning",
    year = "2023",
    journal = "Journal of Artificial Intelligence Research"
}

@article{beck2023survey,
    author = "Beck, Jacob and Vuorio, Risto and Liu, Evan Zheran and Xiong, Zheng and Zintgraf, Luisa and Finn, Chelsea and Whiteson, Shimon",
    title = "A Tutorial on Meta-Reinforcement Learning",
    year = "2025",
    journal = "Foundations and Trends{\textregistered} in Machine Learning"
}

@article{garcia2015comprehensive,
    author = "Garc{\i}a, Javier and Fern{\'a}ndez, Fernando",
    title = "A comprehensive survey on safe reinforcement learning",
    year = "2015",
    journal = "Journal of Machine Learning Research"
}

@article{gu2022review,
    author = "Gu, Shangding and Yang, Long and Du, Yali and Chen, Guang and Walter, Florian and Wang, Jun and Knoll, Alois",
    title = "A review of safe reinforcement learning: Methods, Theories, and Applications",
    year = "2024",
    journal = "IEEE Transactions on Pattern Analysis and Machine Intelligence"
}

@inproceedings{khattar2023a,
    author = "Khattar, Vanshaj and Ding, Yuhao and Sel, Bilgehan and Lavaei, Javad and Jin, Ming",
    year = "2023",
    booktitle = "Proceedings of the International Conference on Learning Representations",
    title = "A {CMDP}-within-online framework for Meta-Safe Reinforcement Learning"
}

@inproceedings{cook2024artificial,
    author = "Cook, Jonathan and Lu, Chris and Hughes, Edward and Leibo, Joel Z and Foerster, Jakob",
    year = "2024",
    booktitle = "Advances in Neural Information Processing Systems",
    title = "Artificial Generational Intelligence: Cultural Accumulation in Reinforcement Learning"
}

@inproceedings{vaswani2017attention,
    author = "Vaswani, Ashish and Shazeer, Noam and Parmar, Niki and Uszkoreit, Jakob and Jones, Llion and Gomez, Aidan N and Kaiser, {\L}ukasz and Polosukhin, Illia",
    year = "2017",
    booktitle = "Advances in Neural Information Processing Systems",
    title = "Attention is all you need"
}

@article{ray2019benchmarking,
    author = "Ray, Alex and Achiam, Joshua and Amodei, Dario",
    title = "Benchmarking Safe Exploration in Deep Reinforcement Learning",
    year = "2019",
    journal = "OpenAI preprint"
}

@article{xie2026beyond,
    author = "Xie, Zixuan and Liu, Xinyu and Chen, Claire and Liu, Shuze and Chandra, Rohan and Zhang, Shangtong",
    year = "2026",
    journal = "arXiv Preprint",
    title = "Beyond Linear Attention: Softmax Transformers Implement In-Context Reinforcement Learning"
}

@inproceedings{krishnamurthy2024can,
    author = "Krishnamurthy, Akshay and Harris, Keegan and Foster, Dylan J and Zhang, Cyril and Slivkins, Aleksandrs",
    year = "2024",
    booktitle = "Advances in Neural Information Processing Systems",
    title = "Can large language models explore in-context?"
}

@inproceedings{liu2023cdt,
    author = "Liu, Zuxin and Guo, Zijian and Yao, Yihang and Cen, Zhepeng and Yu, Wenhao and Zhang, Tingnan and Zhao, Ding",
    year = "2023",
    booktitle = "Proceedings of the International Conference on Machine Learning",
    title = "Constrained Decision Transformer for Offline Safe Reinforcement Learning"
}

@book{altman2021constrained,
    author = "Altman, Eitan",
    title = "Constrained {Markov} decision processes",
    year = "2021",
    publisher = "Routledge"
}

@inproceedings{achiam2017constrained,
    author = "Achiam, Joshua and Held, David and Tamar, Aviv and Abbeel, Pieter",
    year = "2017",
    booktitle = "Proceedings of the International Conference on Machine Learning",
    title = "Constrained policy optimization"
}

@inproceedings{yang2022cup,
    author = "Yang, Long and Ji, Jiaming and Dai, Juntao and Zhang, Linrui and Zhou, Binbin and Li, Pengfei and Yang, Yaodong and Pan, Gang",
    year = "2022",
    booktitle = "Advances in Neural Information Processing Systems",
    title = "Constrained update projection approach to safe policy optimization"
}

@inproceedings{lillicrap2015continuous,
    author = "Lillicrap, Timothy P. and Hunt, Jonathan J. and Pritzel, Alexander and Heess, Nicolas and Erez, Tom and Tassa, Yuval and Silver, David and Wierstra, Daan",
    year = "2016",
    booktitle = "Proceedings of the International Conference on Learning Representations",
    title = "Continuous control with deep reinforcement learning"
}

@article{xie2026convergence,
    author = "Xie, Zixuan and Liu, Xinyu and Chandra, Rohan and Zhang, Shangtong",
    year = "2026",
    journal = "arXiv Preprint",
    title = "Convergence and Emergence of In-Context Reinforcement Learning with Chain of Thought"
}

@inproceedings{guan2024cost,
    author = "Guan, Cong and Xue, Ruiqi and Zhang, Ziqian and Li, Lihe and Li, Yi-Chen and Yuan, Lei and Yu, Yang",
    title = "Cost-aware Offline Safe Meta Reinforcement Learning with Robust In-Distribution Online Task Adaptation",
    booktitle = "Proceedings of the International Conference on Autonomous Agents and Multiagent Systems",
    year = "2024"
}

@article{wang2022deep,
    author = "Wang, Xu and Wang, Sen and Liang, Xingxing and Zhao, Dawei and Huang, Jincai and Xu, Xin and Dai, Bin and Miao, Qiguang",
    title = "Deep reinforcement learning: A survey",
    year = "2024",
    journal = "IEEE Transactions on Neural Networks and Learning Systems"
}

@inproceedings{son2025distilling,
    author = "Son, Jaehyeon and Lee, Soochan and Kim, Gunhee",
    year = "2025",
    booktitle = "Proceedings of the International Conference on Learning Representations",
    title = "Distilling Reinforcement Learning Algorithms for In-Context Model-Based Planning"
}

@inproceedings{xu2025efficient,
    author = "Xu, Siyuan and Zhu, Minghui",
    year = "2025",
    booktitle = "Advances in Neural Information Processing Systems",
    title = "Efficient Safe Meta-Reinforcement Learning: Provable Near-Optimality and Anytime Safety"
}

@inproceedings{zisman2024emergence,
    author = "Zisman, Ilya and Kurenkov, Vladislav and Nikulin, Alexander and Sinii, Viacheslav and Kolesnikov, Sergey",
    year = "2024",
    booktitle = "Proceedings of the International Conference on Machine Learning",
    title = "Emergence of In-Context Reinforcement Learning from Noise Distillation"
}

@article{levine2016end,
    author = "Levine, Sergey and Finn, Chelsea and Darrell, Trevor and Abbeel, Pieter",
    title = "End-to-End Training of Deep Visuomotor Policies",
    year = "2016",
    journal = "Journal of Machine Learning Research"
}

@inproceedings{raparthy2023generalization,
    author = "Raparthy, Sharath Chandra and Hambro, Eric and Kirk, Robert and Henaff, Mikael and Raileanu, Roberta",
    year = "2024",
    booktitle = "Proceedings of the International Conference on Machine Learning",
    title = "Generalization to new sequential decision making tasks with in-context learning"
}

@article{mnih2015human,
    author = "Mnih, Volodymyr and Kavukcuoglu, Koray and Silver, David and Rusu, Andrei A. and Veness, Joel and Bellemare, Marc G. and Graves, Alex and Riedmiller, Martin A. and Fidjeland, Andreas and Ostrovski, Georg and Petersen, Stig and Beattie, Charles and Sadik, Amir and Antonoglou, Ioannis and King, Helen and Kumaran, Dharshan and Wierstra, Daan and Legg, Shane and Hassabis, Demis",
    title = "Human-level control through deep reinforcement learning",
    year = "2015",
    journal = "Nature"
}

@inproceedings{dongInContextReinforcementLearning2024,
    author = "Dong, Juncheng and Guo, Moyang and Fang, Ethan X. and Yang, Zhuoran and Tarokh, Vahid",
    title = "In-Context Reinforcement Learning Without Optimal Action Labels",
    booktitle = "1st ICML Workshop on In-Context Learning",
    year = "2024"
}

@inproceedings{sinii2023context,
    author = "Sinii, Viacheslav and Nikulin, Alexander and Kurenkov, Vladislav and Zisman, Ilya and Kolesnikov, Sergey",
    year = "2024",
    booktitle = "Proceedings of the International Conference on Machine Learning",
    title = "In-Context Reinforcement Learning for Variable Action Spaces"
}

@inproceedings{dai2024incontext,
    author = "Dai, Zhenwen and Tomasi, Federico and Ghiassian, Sina",
    year = "2024",
    booktitle = "Proceedings of the International Conference on Learning Representations",
    title = "In-context Exploration-Exploitation for Reinforcement Learning"
}

@inproceedings{laskin2023incontext,
    author = "Laskin, Michael and Wang, Luyu and Oh, Junhyuk and Parisotto, Emilio and Spencer, Stephen and Steigerwald, Richie and Strouse, DJ and Hansen, Steven Stenberg and Filos, Angelos and Brooks, Ethan and maxime gazeau and Sahni, Himanshu and Singh, Satinder and Mnih, Volodymyr",
    year = "2023",
    booktitle = "Proceedings of the International Conference on Learning Representations",
    title = "In-context Reinforcement Learning with Algorithm Distillation"
}

@inproceedings{huang2024context,
    author = "Huang, Sili and Hu, Jifeng and Chen, Hechang and Sun, Lichao and Yang, Bo",
    year = "2024",
    booktitle = "Proceedings of the International Conference on Machine Learning",
    title = "In-context decision transformer: reinforcement learning via hierarchical chain-of-thought"
}

@inproceedings{wang2016learning,
    author = "Wang, Jane X and Kurth-Nelson, Zeb and Tirumala, Dhruva and Soyer, Hubert and Leibo, Joel Z and Munos, Remi and Blundell, Charles and Kumaran, Dharshan and Botvinick, Matt",
    year = "2017",
    booktitle = "Proceedings of the Annual Meeting of the Cognitive Science Society",
    title = "Learning to reinforcement learn"
}

@inproceedings{gu_mamba_2024,
    author = "Gu, Albert and Dao, Tri",
    title = "Mamba: Linear-Time Sequence Modeling with Selective State Spaces",
    booktitle = "First Conference on Language Modeling",
    year = "2024"
}

@book{puterman2014markov,
    author = "Puterman, Martin L",
    title = "{Markov} decision processes: discrete stochastic dynamic programming",
    year = "1994",
    publisher = "John Wiley \& Sons"
}

@inproceedings{finn2017model,
    author = "Finn, Chelsea and Abbeel, Pieter and Levine, Sergey",
    year = "2017",
    booktitle = "Proceedings of the International Conference on Machine Learning",
    title = "Model-agnostic meta-learning for fast adaptation of deep networks"
}

@inproceedings{hessel2021muesli,
    author = "Hessel, Matteo and Danihelka, Ivo and Viola, Fabio and Guez, Arthur and Schmitt, Simon and Sifre, Laurent and Weber, Theophane and Silver, David and Van Hasselt, Hado",
    year = "2021",
    booktitle = "Proceedings of the International Conference on Machine Learning",
    title = "Muesli: Combining improvements in policy optimization"
}

@book{nocedal1999numerical,
    author = "Nocedal, Jorge and Wright, Stephen J",
    title = "Numerical optimization",
    year = "1999",
    publisher = "Springer"
}

@misc{baselines,
    author = "Dhariwal, Prafulla and Hesse, Christopher and Klimov, Oleg and Nichol, Alex and Plappert, Matthias and Radford, Alec and Schulman, John and Sidor, Szymon and Wu, Yuhuai and Zhokhov, Peter",
    title = "OpenAI Baselines",
    year = "2017",
    journal = "GitHub Repository"
}

@article{Kirkpatrick2017,
    author = "Kirkpatrick, James and Pascanu, Razvan and Rabinowitz, Neil and Veness, Joel and Desjardins, Guillaume and Rusu, Andrei A. and Milan, Kieran and Quan, John and Ramalho, Tiago and Grabska-Barwinska, Agnieszka and Hassabis, Demis and Clopath, Claudia and Kumaran, Dharshan and Hadsell, Raia",
    title = "Overcoming catastrophic forgetting in neural networks",
    year = "2017",
    journal = "Proceedings of the National Academy of Sciences"
}

@inproceedings{xu2022prompting,
    author = "Xu, Mengdi and Shen, Yikang and Zhang, Shun and Lu, Yuchen and Zhao, Ding and Tenenbaum, Joshua and Gan, Chuang",
    year = "2022",
    booktitle = "Proceedings of the International Conference on Machine Learning",
    title = "Prompting decision transformer for few-shot policy generalization"
}

@article{schulman2017proximal,
    author = "Schulman, John and Wolski, Filip and Dhariwal, Prafulla and Radford, Alec and Klimov, Oleg",
    year = "2017",
    journal = "arXiv Preprint",
    title = "Proximal Policy Optimization Algorithms"
}

@book{sutton2018reinforcement,
    author = "Sutton, Richard S and Barto, Andrew G",
    title = "Reinforcement Learning: An Introduction (2nd Edition)",
    year = "2018",
    publisher = "MIT press"
}

@inproceedings{Chen2018,
    author = "Tessler, Chen and Mankowitz, Daniel J. and Mannor, Shie",
    year = "2019",
    booktitle = "Proceedings of the International Conference on Learning Representations",
    title = "Reward Constrained Policy Optimization"
}

@inproceedings{song2026reward,
    author = "Song, Kefan and Moeini, Amir and Wang, Peng and Gong, Lei and Chandra, Rohan and Zhang, Shangtong and Qi, Yanjun",
    year = "2026",
    booktitle = "Proceedings of the International Conference on Learning Representations",
    title = "Reward Is Enough: {LLM}s Are In-Context Reinforcement Learners"
}

@inproceedings{robustrl2024,
    author = "Gu, Shangding and Shi, Laixi and Wen, Muning and Jin, Ming and Mazumdar, Eric and Chi, Yuejie and Wierman, Adam and Spanos, Costas",
    year = "2025",
    booktitle = "Proceedings of the International Conference on Learning Representations",
    title = "Robust Gymnasium: A Unified Modular Benchmark for Robust Reinforcement Learning"
}

@article{kwon2025runtimesafetyadaptiveshielding,
    author = "Kwon, Minjae and Ingebrand, Tyler and Topcu, Ufuk and Feng, Lu",
    year = "2025",
    journal = "arXiv Preprint",
    title = "Runtime Safety through Adaptive Shielding: From Hidden Parameter Inference to Provable Guarantees"
}

@inproceedings{wachi2020safe,
    author = "Wachi, Akifumi and Sui, Yanan",
    year = "2020",
    booktitle = "Proceedings of the International Conference on Machine Learning",
    title = "Safe reinforcement learning in constrained {Markov} decision processes"
}

@inproceedings{ji2023safety,
    author = "Ji, Jiaming and Zhang, Borong and Zhou, Jiayi and Pan, Xuehai and Huang, Weidong and Sun, Ruiyang and Geng, Yiran and Zhong, Yifan and Dai, Josef and Yang, Yaodong",
    year = "2023",
    booktitle = "Advances in Neural Information Processing Systems",
    title = "Safety Gymnasium: A Unified Safe Reinforcement Learning Benchmark"
}

@phdthesis{paternain2018stochastic,
    author = "Paternain, Santiago",
    title = "Stochastic control foundations of autonomous behavior",
    year = "2018",
    school = "University of Pennsylvania"
}

@inproceedings{lus42023,
    author = "Lu, Chris and Schroecker, Yannick and Gu, Albert and Parisotto, Emilio and Foerster, Jakob and Singh, Satinder and Behbahani, Feryal",
    year = "2023",
    booktitle = "Advances in Neural Information Processing Systems",
    title = "Structured State Space Models for In-Context Reinforcement Learning"
}

@inproceedings{lee2024supervised,
    author = "Lee, Jonathan and Xie, Annie and Pacchiano, Aldo and Chandak, Yash and Finn, Chelsea and Nachum, Ofir and Brunskill, Emma",
    year = "2023",
    booktitle = "Advances in Neural Information Processing Systems",
    title = "Supervised pretraining can learn in-context reinforcement learning"
}

@inproceedings{kirsch2023towards,
    author = {Kirsch, Louis and Harrison, James and Freeman, C. and Sohl-Dickstein, Jascha and Schmidhuber, J{\"u}rgen},
    title = "Towards General-Purpose In-Context Learning Agents",
    booktitle = "NeurIPS Foundation Models for Decision Making Workshop",
    year = "2023"
}

@inproceedings{wang2025towards,
    author = "Wang, Jiuqi and Chandra, Rohan and Zhang, Shangtong",
    year = "2025",
    booktitle = "Advances in Neural Information Processing Systems",
    title = "Towards Provable Emergence of In-Context Reinforcement Learning"
}

@inproceedings{wang2025transformers,
    author = "Wang, Jiuqi and Blaser, Ethan and Daneshmand, Hadi and Zhang, Shangtong",
    year = "2025",
    booktitle = "Proceedings of the International Conference on Learning Representations",
    title = "Transformers Can Learn Temporal Difference Methods for In-Context Reinforcement Learning"
}

@inproceedings{lin2024transformers,
    author = "Lin, Licong and Bai, Yu and Mei, Song",
    year = "2024",
    booktitle = "Proceedings of the International Conference on Learning Representations",
    title = "Transformers as Decision Makers: Provable In-Context Reinforcement Learning via Supervised Pretraining"
}

@inproceedings{grigsby2024amago2,
    author = "Grigsby, Jake and Sasek, Justin and Parajuli, Samyak and Adebi, Daniel and Zhang, Amy and Zhu, Yuke",
    year = "2024",
    booktitle = "Advances in Neural Information Processing Systems",
    title = "{AMAGO-2}: Breaking the Multi-Task Barrier in Meta-Reinforcement Learning with Transformers"
}

@inproceedings{grigsby2024amago,
    author = "Grigsby, Jake and Fan, Linxi and Zhu, Yuke",
    year = "2024",
    booktitle = "Proceedings of the International Conference on Learning Representations",
    title = "{AMAGO}: Scalable In-Context Reinforcement Learning for Adaptive Agents"
}

@inproceedings{xu2021crpo,
    author = "Xu, Tengyu and Liang, Yingbin and Lan, Guanghui",
    year = "2021",
    booktitle = "Proceedings of the International Conference on Machine Learning",
    title = "{CRPO}: A New Approach for Safe Reinforcement Learning with Convergence Guarantee"
}

@article{fu2020d4rl,
    author = "Fu, Justin and Kumar, Aviral and Nachum, Ofir and Tucker, George and Levine, Sergey",
    year = "2020",
    journal = "arXiv Preprint",
    title = "{D4RL:} Datasets for Deep Data-Driven Reinforcement Learning"
}

@inproceedings{luo2021mesa,
    author = "Luo, Michael and Balakrishna, Ashwin and Thananjeyan, Brijen and Nair, Suraj and Ibarz, Julian and Tan, Jie and Finn, Chelsea and Stoica, Ion and Goldberg, Ken",
    title = "{MESA}: Offline Meta-RL for Safe Adaptation and Fault Tolerance",
    booktitle = "Workshop on Safe and Robust Control of Uncertain Systems at the Conference on Neural Information Processing Systems",
    year = "2021"
}

@inproceedings{todorov2012mujoco,
    author = "Todorov, Emanuel and Erez, Tom and Tassa, Yuval",
    title = "{MuJoCo}: {A} physics engine for model-based control",
    booktitle = "Proceedings of the International Conference on Intelligent Robots and Systems",
    year = "2012"
}

@article{duan2016rl,
    author = "Duan, Yan and Schulman, John and Chen, Xi and Bartlett, Peter L and Sutskever, Ilya and Abbeel, Pieter",
    year = "2016",
    journal = "arXiv Preprint",
    title = "{RL}$^2$: Fast Reinforcement Learning via Slow Reinforcement Learning"
}

@article{elawadyReLICRecipe64k2024,
    author = "Elawady, Ahmad and Chhablani, Gunjan and Ramrakhya, Ram and Yadav, Karmesh and Batra, Dhruv and Kira, Zsolt and Szot, Andrew",
    year = "2024",
    journal = "arXiv Preprint",
    title = "{ReLIC}: A Recipe for 64k Steps of In-Context Reinforcement Learning for Embodied {AI}"
}
\bibliographystyle{icml2026}

\clearpage
\appendix
\onecolumn
\newcommand{\CommentSty}[1]{\hfill{\footnotesize\textit{#1}}}
    \section{Algorithm}
\begin{algorithm}[H]
    \caption{\label{alg: scared} SCARED Implemented with DDPG}
    \begin{algorithmic}[1]
\STATE \textbf{Input:} discount factor $\gamma$, cost budget $\delta$, number of training steps $T_{\max}$, batch size $N$, env time limit $t_{\max}$, episodes-per-history range $[K_{\min}, K_{\max}]$, CTG range $[\mathrm{CTG}_{\min}, \mathrm{CTG}_{\max}]$, environment distribution $\mathcal{E}$
\STATE \textbf{Initialize:} actor $\pi(\cdot \mid s, H, \mathrm{CTG}; \theta_p)$, reward critic $Q(s,a;\theta_v)$, cost critic $Q^c(s,a;\theta_c)$, and target nets $\{\pi',Q',Q^{c\prime}\}$. $\lambda \gets 0$, \quad replay buffer $R \gets \varnothing$
\FOR{$T \gets 1$ \textbf{to} $T_{\max}$}
  \STATE $K \gets \texttt{rnd}(K_{\min}, K_{\max})$, \quad $\mathrm{CTG} \gets \texttt{rnd}(\mathrm{CTG}_{\min}, \mathrm{CTG}_{\max})$, \quad $t \gets 0$  \\
   $H \gets [\;]$ \CommentSty{// list of episodes}
   \STATE sample \texttt{env}$\sim \mathcal{E}$,\quad $s_t \gets \texttt{env.reset()}$
  \FOR{$k \gets 1$ \textbf{to} $K$}
    \STATE $t_{\text{start}} \gets t$, \quad $\mathrm{CTG}_k \gets \mathrm{CTG}$, \quad $e_k \gets [\;]$ \CommentSty{// list of transitions}
    \WHILE{$t - t_{\text{start}} < t_{\max}$ \textbf{and} $s_t$ not terminal}
      \STATE sample $a_t \sim \pi_{\theta_p}(\,\cdot \mid s_t, H, \mathrm{CTG}_k)$
      \STATE step $a_t$ in env $\to (s_{t+1}, r_{t+1}, c_{t+1})$
      \STATE append $(s_t, a_t, r_{t+1}, c_{t+1}, s_{t+1}, \mathrm{CTG}_k)$ to $e_k$
      \STATE $\mathrm{CTG}_k \gets \mathrm{CTG}_k - c_{t+1}$, \quad $t \gets t + 1$
    \ENDWHILE
    \STATE append $e_k$ to $H$
    \STATE $s_t \gets \texttt{env.reset()}$
  \ENDFOR
  \STATE append $H$ to $R$
  \STATE $\mathcal{D} \gets \texttt{Sample}(R, N)$ \CommentSty{// $N$ trajectories}
  \STATE reset accumulators: $d\theta_v \gets 0$, $d\theta_c \gets 0$, $d\theta_p \gets 0, d\lambda \gets 0$
  \FOR{\textbf{each} trajectory $H \in \mathcal{D}$}
    \FOR{\textbf{each} episode $e \in H$}
      \STATE $C_e \gets \sum_{c\in e} c$, \quad $C_e^{\max} \gets -\infty$, \quad $v_e \gets \mathbf{1}\{ C_e > \delta \}$
      \FOR{\textbf{each} $(s_t, a_t, r_{t+1}, c_{t+1}, s_{t+1}, \mathrm{CTG}_t) \in e$}
        \STATE sample $a'_{t+1} \sim \pi_{\theta_p'}(\,\cdot \mid s_{t+1}, H_{\le t+1}, \mathrm{CTG}_t - c_{t+1})$
        \STATE $L_v \gets \big( Q_{\theta_v}(s_t, a_t) - [\, r_{t+1} + \gamma  Q_{\theta_v'}(s_{t+1}, a'_{t+1}) \,] \big)^2$
        \STATE $d\theta_v \gets d\theta_v + \nabla_{\theta_v} L_v$
        \STATE $L_c \gets \big( Q^c_{\theta_c}(s_t, a_t) - [\, c_{t+1} + \gamma  Q^{c}_{\theta_c'}(s_{t+1}, a'_{t+1}) \,] \big)^2$
        \STATE $d\theta_c \gets d\theta_c + \nabla_{\theta_c} L_c$
        \STATE sample $\hat a_t \sim \pi_{\theta_p}(\,\cdot \mid s_t, H_{\le t}, \mathrm{CTG}_t)$
        \STATE $L_p \gets -\,Q_{\theta_v}(s_t, \hat a_t) + \lambda\, v_e \, Q^c_{\theta_c}(s_t, \hat a_t)$
        \STATE $d\theta_p \gets d\theta_p + \nabla_{\theta_p} L_p$
      \ENDFOR
      \STATE $C_e^{\max} \gets \max\qty{C_e^{\max}, C_e}$
    \ENDFOR
    \STATE $d\lambda \gets d\lambda + (C_e^{\max} - \delta)$
  \ENDFOR
  \STATE apply parameter updates to $\theta_v, \theta_c, \theta_p, \lambda$ using $d\theta_v, d\theta_c, d\theta_p, d\lambda$
  \STATE update targets $\pi', Q', Q^{c\prime}$
\ENDFOR
    \end{algorithmic}
\end{algorithm}
\twocolumn
Note that our proposed optimization \eqref{eq:iter} can be used with any policy optimizer. In our implementation, we chose DDPG following \citet{grigsby2024amago} to support continuous and discrete action spaces and to enable highly parallel data-collection while doing off-policy training.

\section{Proofs}\label{sec:proof}
We organize the proofs into two subsections: one for Theorem 1 and one for Proposition 1. For Theorem 1, we first prove the supporting lemma and then Theorem 1 itself. For Proposition 1, we provide proofs for both Total Variation Distance and KL Divergence.

\subsection{Proof of Theorem~\ref{th:t1}}\label{apx:t1}
\begin{lemma}\label{th:l1}
    Let $[x]_+=\max\{0,x\}$. Fix $(\bar\lambda,\bar s)\in\mathbb{R}\times\mathbb{R}$. If for all sufficiently small $\eta > 0$, $\bar\lambda=[\bar\lambda+\eta\,\bar s]_+$, then $\bar\lambda\ge 0$ ,$\ \bar s\le 0$ , and $\ \bar\lambda\,\bar s=0.$
\end{lemma}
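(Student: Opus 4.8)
\textbf{Proof plan for Lemma~\ref{th:l1}.}
The plan is to read off the three conclusions one at a time from the fixed-point identity $\bar\lambda=[\bar\lambda+\eta\,\bar s]_+$, using only elementary properties of $[\,\cdot\,]_+=\max\{0,\cdot\}$ and an appropriate choice of small $\eta>0$ in each case.

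First I would establish $\bar\lambda\ge 0$. This is immediate: for any admissible $\eta$, the right-hand side $[\bar\lambda+\eta\,\bar s]_+$ is a maximum with $0$, hence nonnegative, and it equals $\bar\lambda$.

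Next I would prove $\bar s\le 0$ by contradiction. Suppose $\bar s>0$. Then for every $\eta>0$ we have $\bar\lambda+\eta\,\bar s>\bar\lambda\ge 0$, so $[\bar\lambda+\eta\,\bar s]_+=\bar\lambda+\eta\,\bar s>\bar\lambda$, contradicting the identity (which holds for all sufficiently small $\eta$, and in particular for some $\eta>0$). Hence $\bar s\le 0$.

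Finally I would prove the complementarity relation $\bar\lambda\,\bar s=0$, again by contradiction. Since we already have $\bar\lambda\ge 0$ and $\bar s\le 0$, it suffices to rule out $\bar\lambda>0$ and $\bar s<0$ simultaneously. In that case, for all $\eta\in(0,\,-\bar\lambda/\bar s)$ the argument $\bar\lambda+\eta\,\bar s$ is still strictly positive, so $[\bar\lambda+\eta\,\bar s]_+=\bar\lambda+\eta\,\bar s<\bar\lambda$, again contradicting the identity for small $\eta$. Therefore $\bar\lambda=0$ or $\bar s=0$, i.e., $\bar\lambda\,\bar s=0$. The argument is entirely routine; the only point requiring a little care is keeping track of the quantifier ``for all sufficiently small $\eta>0$'' and choosing, in the two contradiction steps, an $\eta$ small enough that the sign of $\bar\lambda+\eta\,\bar s$ is controlled — which is the mild obstacle, though a trivial one here.
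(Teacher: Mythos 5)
Your proof is correct and is essentially the same elementary argument as the paper's: both hinge on observing that for sufficiently small $\eta>0$ the sign of $\bar\lambda+\eta\,\bar s$ is controlled, so the projection becomes the identity and the fixed-point equation forces the sign/complementarity conditions. The paper organizes it as a case split on $\bar\lambda>0$ versus $\bar\lambda=0$ (concluding $\bar s=0$ directly in the first case), while you derive the three conclusions one at a time by contradiction — a purely cosmetic difference.
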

\begin{proof}
Consider two cases.

\textit{Case $\bar\lambda>0$.} For all small $\eta>0$, $\bar\lambda+\eta\bar s>0$, hence

  $$
  [\bar\lambda+\eta\bar s]_+=\bar\lambda+\eta\bar s.
  $$

  The equality $\bar\lambda=[\bar\lambda+\eta\bar s]_+$ then forces $\bar\lambda=\bar\lambda+\eta\bar s$, so $\bar s=0$. Thus $\bar s\le 0$ and $\bar\lambda\bar s=0$ hold.

\textit{Case $\bar\lambda=0$.} Then we have $0=[\eta\bar s]_+=\max\{0,\eta\bar s\}$ for all small $\eta>0$, which implies $\eta\bar s\le 0$, hence $\bar s\le 0$. Trivially $\bar\lambda\bar s=0$ and $\bar\lambda\ge 0$.
\end{proof}

We now proceed to the proof of Theorem~\ref{th:t1}.
\begin{proof}

Let $J(\pi)\coloneqq\J$ in this proof. We prove the theorem in both directions.
Also let $g_k^+(\pi)\coloneqq\max\{0,g_k(\pi)\}$, and $s(\pi)\coloneqq\max_k g_k(\pi)$.

\textit{1. Fixed point $\Rightarrow$ Primal optimal.}

\textit{Feasibility.} By the lemma \ref{th:l1}, $s(\bar\pi)\le 0$. Since $s(\bar\pi)=\max_i g_i(\bar\pi)$, each $g_i(\bar\pi)\le 0$. Thus $\bar\pi$ is feasible.

\textit{Optimality.} Let $\pi^\star$ be any optimal feasible policy (exists by assumption \ref{th:a1}).
On the feasible set, $g_i^+(\pi^*)=0$, so $L_\Sigma(\pi^*,\bar\lambda)=J(\pi^*)$. Because $\bar\pi$ maximizes $L_\Sigma(\cdot,\bar\lambda)$,

$$
J(\bar\pi)=L_\Sigma(\bar\pi,\bar\lambda)\ \ge\ L_\Sigma(\pi^\star,\bar\lambda)=J(\pi^\star).
$$

Conversely, by optimality of $\pi^\star$ among feasible policies and feasibility of $\bar\pi$, $J(\bar\pi)\le J(\pi^\star)$. Hence $J(\bar\pi)=J(\pi^\star)$.

\textit{2. Primal optimal $\Rightarrow$ Fixed point.}

Let $\lambda^\star$ be an optimal dual solution (assumption \ref{th:a2}). Then by strong duality,

$$
p^\star \;=\; \max_{\pi} \Big(J(\pi)-\textstyle\sum_i \lambda_i^\star g_i(\pi)\Big).
$$

Therefore, for every $\pi$,

\begin{align}\label{eq:L<p^*}
J(\pi)-\sum_i \lambda_i^\star g_i(\pi)\ \le\ p^\star.
\end{align}

Using $\lambda_i^\star\le \|\lambda^\star\|_\infty$ and $g_i^+(\pi)\ge g_i(\pi)$,

$$
\sum_i \lambda_i^\star g_i(\pi)\ \le\ \|\lambda^\star\|_\infty \sum_i g_i^+(\pi).
$$

Subtract this from $J(\pi)$ and combine with \eqref{eq:L<p^*}:

\begin{align}\label{eq:L(lmabda^*)<p^*}
L_\Sigma\big(\pi,\ \|\lambda^\star\|_\infty\big)
\;& =\;J(\pi)-\|\lambda^\star\|_\infty \sum_i g_i^+(\pi) \nonumber\\
\;\ & \le \; J(\pi)-\sum_i \lambda_i^\star g_i(\pi) \nonumber \\
\;\ & \le\; p^\star.
\end{align}

For a primal-optimal $\pi^\star$, feasibility gives $\sum_i g_i^+(\pi^\star)=0$, hence

$$
L_\Sigma\big(\pi^\star,\ \|\lambda^\star\|_\infty\big)=J(\pi^\star)=p^\star.
$$

Together with \eqref{eq:L(lmabda^*)<p^*}, this shows

$$
\pi^\star\in\arg\max_\pi L_\Sigma\big(\pi,\ \|\lambda^\star\|_\infty\big).
$$

It remains to check multiplier stationarity at $\bar\lambda=\|\lambda^\star\|_\infty$.
By complementary slackness, $\sum_i \lambda_i^\star g_i(\pi^\star)=0$. If $\|\lambda^\star\|_\infty>0$, at least one constraint is active at $\pi^\star$, so $\max_i g_i(\pi^\star)=0$. Hence
$[\bar\lambda+\eta\,s(\pi^\star)]_+=[\bar\lambda+\eta\cdot 0]_+=\bar\lambda$
for all small $\eta>0$. If $\|\lambda^\star\|_\infty=0$, $\pi^\star$ is unconstrained-optimal with $s(\pi^\star)\le 0$, and $[0+\eta\,s(\pi^\star)]_+=0$. In both cases the projection is stationary. Thus $(\pi^\star,\bar\lambda)$ with $\bar\lambda=\|\lambda^\star\|_\infty$ is a fixed point.

\textit{On the multiplier update.}\\
If $s(\pi_{t+1})>0$,
then $\lambda_{t+1}=[\lambda_t+\eta s(\pi_{t+1})]_+\ge \lambda_t$.
If constraints are strict ($s(\pi_{t+1})<0$), then $\lambda_{t+1}\le \lambda_t$.
Under persistent violation, $\lambda_t$ eventually exceeds $\|\lambda^*\|_\infty$,
by the exact-penalty bound \eqref{eq:L(lmabda^*)<p^*},
any maximizer of $L_\Sigma(\cdot,\lambda_t)$
is then feasible/optimal and $s(\pi_{t+1})\le 0$, so $\lambda$ stabilizes.
\end{proof}
Note: The full convergence proof to an exact fixed point is beyond the scope of this work.

\subsection{Proof of Proposition~\ref{thm: prop1}}\label{apx: prop}
\paragraph{Total Variation Distance}
First, we present the exact form of probabilities for $\prob_{\text{train}}(O)$ and $\prob_{\text{test}}(O)$ where $O \in \mathcal{O} = \{(i, j)\}_{1\leq i \leq n, 1\leq j \leq m}$. Let the map of center $c = (i_c, j_c)$. Then, $\prob_{\text{train}}((i,j)) \propto e^{-\alpha d((i,j), c)}$ and $\prob_{\text{train}}((i,j)) \propto e^{-\alpha d((i,j), c)}$ implies:
$$\prob_{\text {train}}((i, j))=\frac{e^{-\alpha ((i-i_c)^2 + (j-j_c)^2)}}{Z_{-\alpha}} \text{~~and~~}$$
$$\prob_{\text {test}}((i, j))=\frac{e^{\alpha ((i-i_c)^2 + (j-j_c)^2)}}{Z_{\alpha}}$$,
where
$
Z_{-\alpha} = \sum_{\left(i^{\prime}, j^{\prime}\right) \in \mathcal{O}} e^{-\alpha ((i'-i_c)^2 + (j'-j_c)^2)}
\text{~and~}
Z_{\alpha} = \sum_{\left(i^{\prime}, j^{\prime}\right) \in \mathcal{O}} e^{\alpha ((i'-i_c)^2 + (j'-j_c)^2)}.
$
Let us define $supp(P)$ as the set $\{x \in Dom(P) \mid P(x) >0\}.$
If $supp(\prob_{\text{train}}) \cap supp(\prob_{\text{test}}) = \emptyset$ holds, then for each $O \in \mathcal{O}$,  $\prob_{\text{test}}(O) > 0$ implies $\prob_{\text{train}}(O) = 0$, and $\prob_{\text{train}}(O) > 0$ implies $\prob_{\text{test}}(O) = 0$.
Hence, we obtain
\begin{align}
\delta(\prob_{\text{train}}, \prob_{\text{test}})
& = \frac{1}{2} \sum_{O \in \mathcal{O}}|\prob_{\text{train}}(O)-\prob_{\text{test}}(O) | \\
& = \frac{1}{2} \sum_{O \in \mathcal{O}}(|\prob_{\text{train}}(O)| + |\prob_{\text{test}}(O)|) = 1.
\end{align}

Now, we claim that $supp(\prob_{\text{train}}) \cap supp(\prob_{\text{test}}) = \emptyset$ as $\alpha \rightarrow \infty$. For the training distribution $\prob_{\text{train}}$, the term $e^{-\alpha\left(\left(i-i_c\right)^2+\left(j-j_c\right)^2\right)} \rightarrow 0$
if $\left(i-i_c\right)^2+\left(j-j_c\right)^2>0$. Hence,
$$
Z_{-\alpha}=e^{-\alpha \cdot 0}+\sum_{(i, j) \neq\left(i_c, j_c\right)} e^{-\alpha\left(\left(i-i_c\right)^2+\left(j-j_c\right)^2\right)} \rightarrow 1
$$
Hence, only when $i=i_c$ and $j=j_c$, we have positive probability $\prob_{\text{train}}((i_c, j_c)) = \frac{e^{-\alpha \cdot 0}}{Z_{-\alpha}} = \frac{1}{Z_{-\alpha}} = 1$. Thus $supp(\prob_\text{train}) \rightarrow \{(i_c, j_c)\}$

For the test distribution $\prob_{\text{test}}$, the term $e^{\alpha\left(\left(i-i_c\right)^2+\left(j-j_c\right)^2\right)}$ is maximized when $\left(i-i_c\right)^2+\left(j-j_c\right)^2$ is maximized. Let $d_{\max }=\max _{(i, j) \in \mathcal{O}}\left(\left(i-i_c\right)^2+\left(j-j_c\right)^2\right)$, and $\mathcal{O}_{\text {edge }}=\left\{(i, j) \in \mathcal{O} \mid\left(i-i_c\right)^2+\left(j-j_c\right)^2=d_{\max }\right\}$. Then the partial function can be decomposed into two terms:
$$
Z_\alpha=\sum_{(i, j) \in \mathcal{O}_{\text {edge }}} e^{\alpha d_{\max }}+\sum_{(i, j) \notin \mathcal{O}_{\text {edge }}} e^{\alpha\left(\left(i-i_c\right)^2+\left(j-j_c\right)^2\right)}.
$$
Hence, as $\alpha \rightarrow \infty$, $\left(i-i_c\right)^2+\left(j-j_c\right)^2<d_{\text{max}}$ implies that
$\frac{e^{\alpha(\left(i-i_c\right)^2+\left(j-j_c\right))}}{Z_\alpha} \rightarrow 0$.
Thus, we obtain
$$
\prob_{\text {test }}((i, j)) \rightarrow
\begin{cases}
\frac{1}{\left|\mathcal{O}_{\text {edge }}\right|} & \text { if }(i, j) \in \mathcal{O}_{\text {edge }}, \\
0 & \text { otherwise }
\end{cases}\quad as \quad \alpha \rightarrow \infty.
$$
Hence, $supp(\prob_{\text{test}}) \rightarrow \mathcal{O}_{\text{edge}}.$ Finally, we conclude that $supp(\prob_{\text{train}}) \cap supp(\prob_{\text{test}}) = \{(i_c, j_c)\} \cap \mathcal{O}_{\text{edge}} = \emptyset$ as $\alpha \rightarrow \infty.$

\paragraph{KL Divergence}
We keep the same notation to the proof so far. By the definition of KL divergence and explicit form of the probabilities, we have
\begin{align}
    & D_{\mathrm{KL}}\left(\prob_{\text {train }} \| \prob_{\text {test }}\right) \nonumber \\
    & = \sum_{(i, j) \in \mathcal{O}} \prob_{\text {train }}((i, j)) \log \frac{\prob_{\text {train }}((i, j))}{\prob_{\text {test }}((i, j))} \nonumber \\
    & = \sum_{(i, j) \in \mathcal{O}} \prob_{\text {train }}((i, j)) \log \frac{Z_\alpha e^{-\alpha ((i-i_c)^2 + (j-j_c)^2)}}{Z_{-\alpha}e^{\alpha ((i-i_c)^2 + (j-j_c)^2)}} \nonumber \\
    & = \sum_{(i, j) \in \mathcal{O}} \prob_{\text {train }}((i, j))\left( \log \frac{Z_\alpha}{Z_{-\alpha}} - 2\alpha((i-i_c)^2 + (j-j_c)^2)\right).
\end{align}
As we have shown in the previous proof for TV distance, $Z_\alpha \rightarrow |\mathcal{O}_{\text{edge}}| e^{\alpha d_{\text{max}}}$ and $Z_{-\alpha} \rightarrow 1$. Hence, for pairs $(i, j) \not = (i_c, j_c)$, the associated term $\prob_{\text {train }}((i, j))\left( \log \frac{Z_\alpha}{Z_{-\alpha}} - 2\alpha((i-i_c)^2 + (j-j_c)^2)\right)$ goes to $\prob_{\text {train }}((i, j)) \left(\log | \mathcal{O}_{\text{edge}}| + \alpha d_{\text{max}} - 2 \alpha d((i, j) , c) \right)$.
For $(i, j) \not = (i_c, j_c)$ and $d_{\text{min}}=\min _{(i, j) \neq\left(i_c, j_c\right)} d((i, j), c)$, we have
\begin{equation}
P_{\text {train }}((i, j))=\frac{e^{-\alpha d((i, j), c)}}{Z_{-\alpha}} \leq e^{-\alpha d_{\text{min}}} \rightarrow 0.
\end{equation}
Since the exponential rate converges to zero more rapidly than the linear rate with respect to $\alpha$, the term
$\prob_{\text {train }}((i, j)) \left(\log | \mathcal{O}_{\text{edge}}| + \alpha d_{\text{max}} - 2 \alpha d((i, j) , c) \right)$ goes to $0$ as $\alpha \rightarrow \infty$.
Therefore, we only consider the term when $(i, j) = (i_c, j_c)$. But this term comes down to $\prob_{\text{train}}((i, j)) \log \frac{Z_\alpha}{Z_{-\alpha}}$, which goes to $\infty$ as $\alpha \rightarrow \infty$. Thus $D_{\mathrm{KL}}\left(\prob_{\text {train }} \| \prob_{\text {test }}\right) \geq \infty.$

\section{Training Details}\label{apx: training details}

\begin{figure*}[t]
\centering
\begin{tikzpicture}
  \matrix (m) [matrix of nodes,
             nodes={inner xsep=5pt, inner ysep=4pt, outer sep=0, anchor=center},
             row sep=8mm] {
    \includegraphics[width=.31\linewidth]{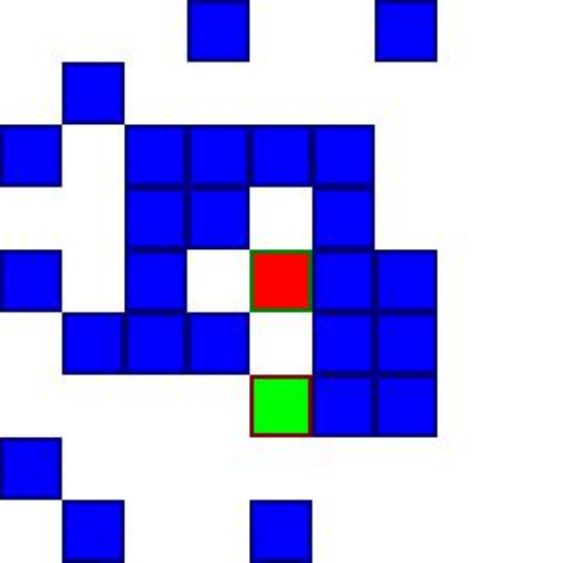} &
    \includegraphics[width=.31\linewidth]{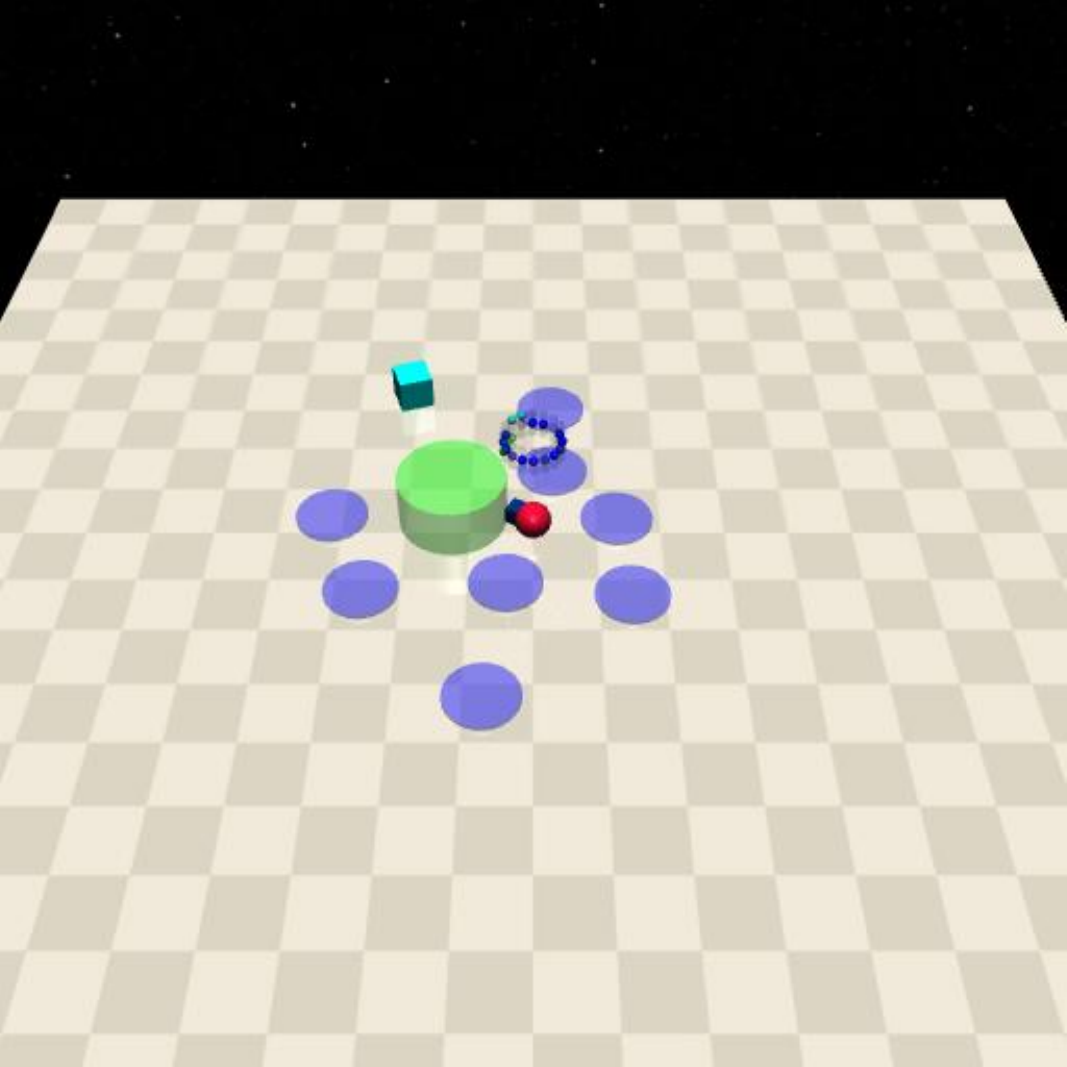} &
    \includegraphics[width=.31\linewidth]{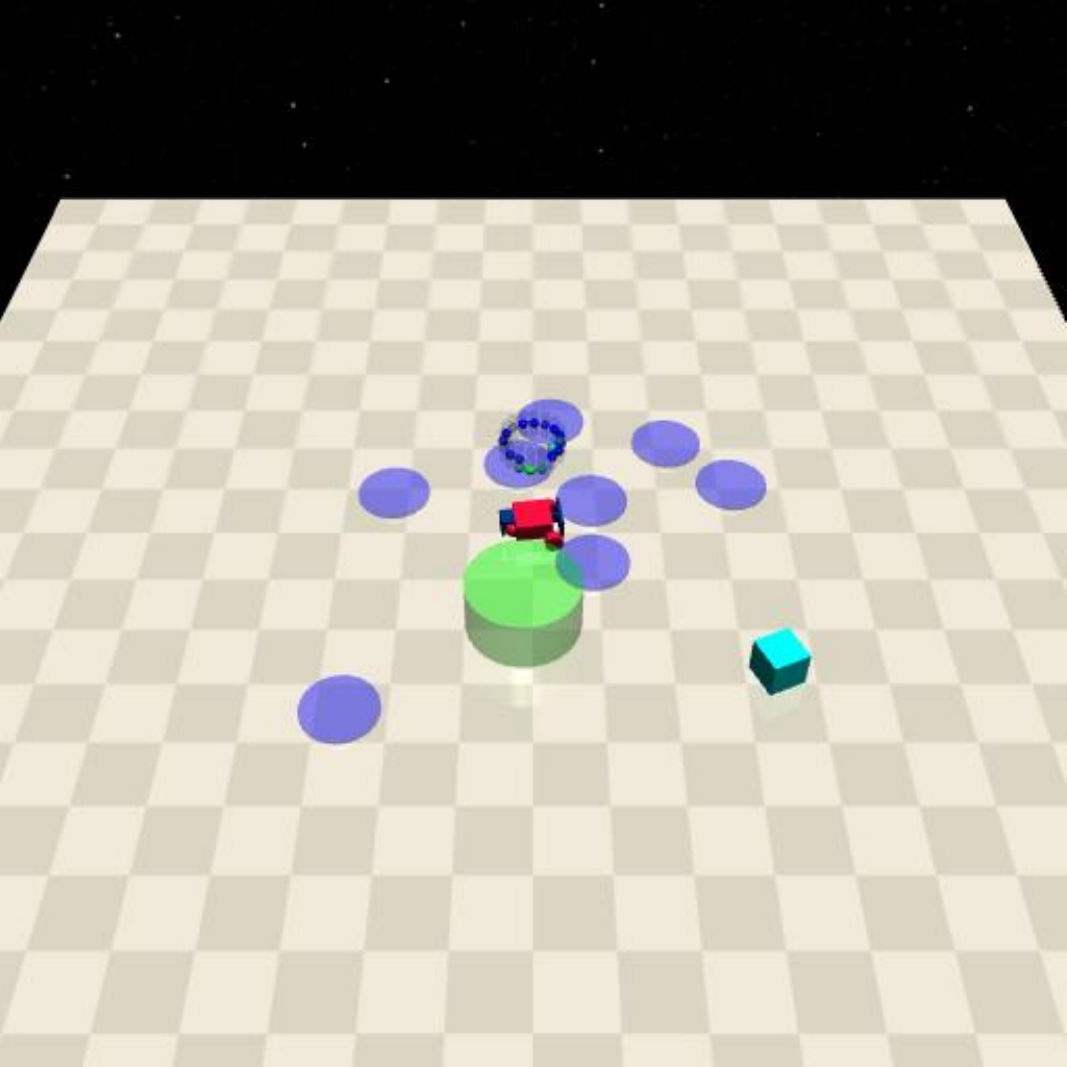} \\
    \includegraphics[width=.31\linewidth]{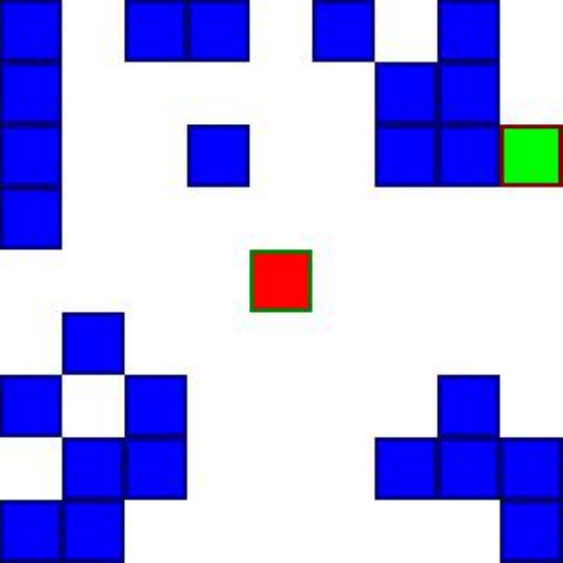} &
    \includegraphics[width=.31\linewidth]{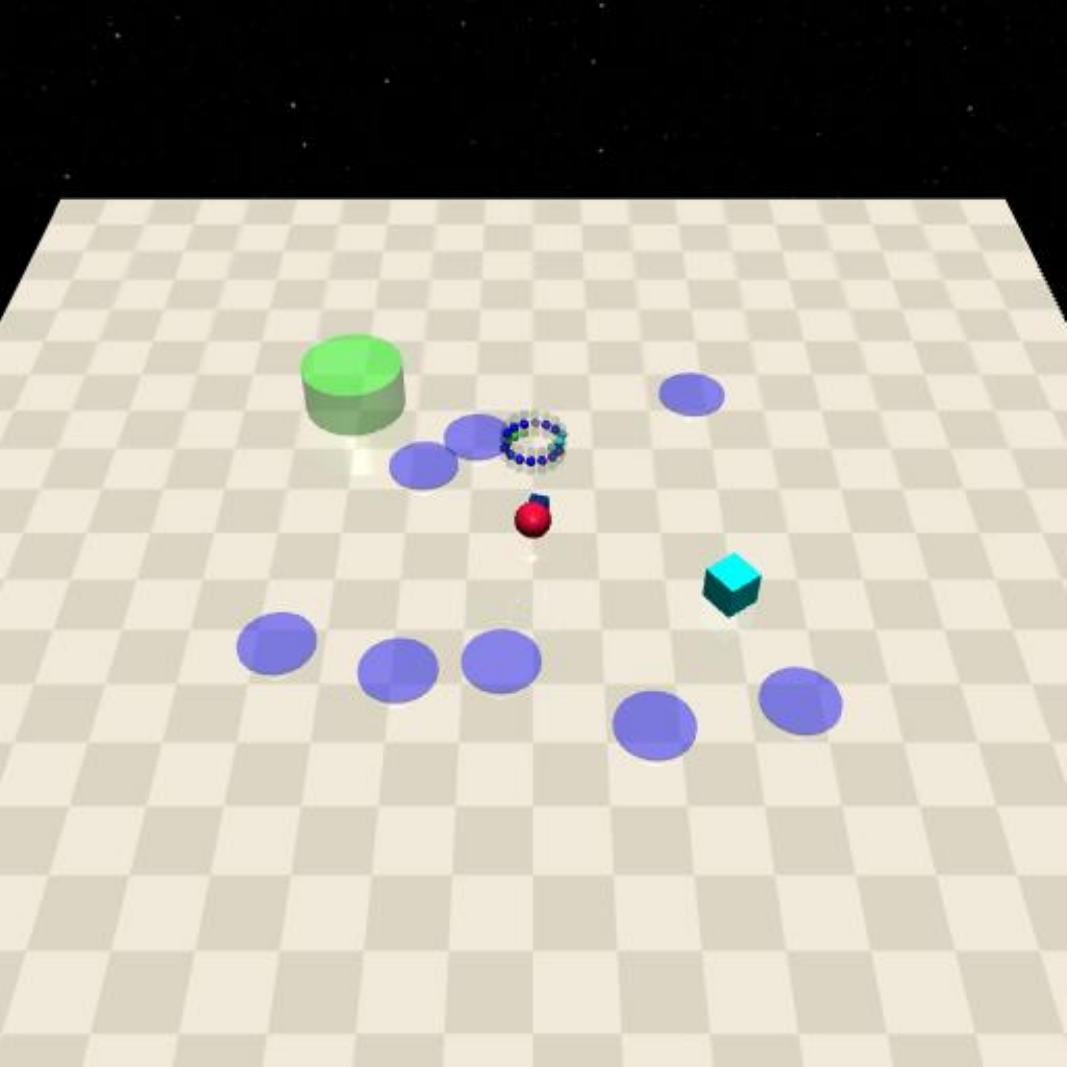} &
    \includegraphics[width=.31\linewidth]{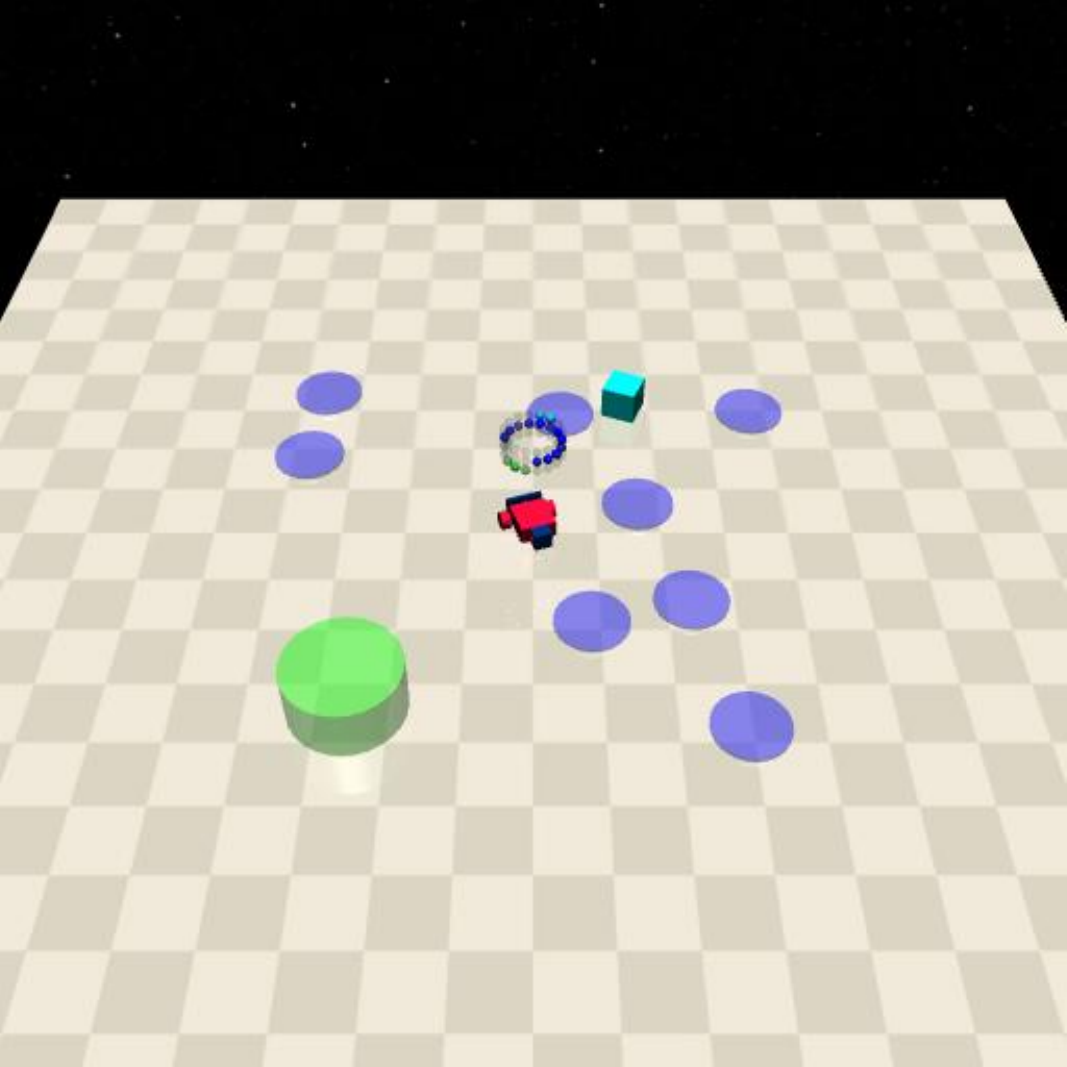} \\
  };

\path
  let \p2 = (m-1-3.south east) in coordinate (topRight) at (\x2,\y2);

\path
  let \p1 = (m-1-1.north west) in coordinate (topLeft) at (\x1,\y1);
\path
  let \p3 = (m-2-1.north west) in coordinate (botLeft) at (\x3,\y3);

\path
  let \p4 = (m-2-3.south east) in coordinate (botRight) at (\x4,\y4);

\begin{pgfonlayer}{background}
  \node (topbg) [fit=(topLeft) (topRight),
                 fill=ao!12, inner sep=0, rounded corners=2pt] {};
  \node (botbg) [fit=(botLeft) (botRight),
                 fill=bittersweet!12,  inner sep=0, rounded corners=2pt] {};
\end{pgfonlayer}

  \coordinate (YL) at ($(topbg.south)!0.5!(botbg.north)$);

  \coordinate (TopMidY) at ($(topbg.north)!0.5!(topbg.south)$);
  \coordinate (BotMidY) at ($(botbg.north)!0.5!(botbg.south)$);

\node[anchor=north, inner sep=0pt,
      font=\bfseries\fontsize{9}{9.6}\selectfont, text=ao!60!black]
     at ([yshift=-2mm]topbg.south)
     {(a) Training Environments};
\node[anchor=north, inner sep=0pt,
      font=\bfseries\fontsize{9}{9.6}\selectfont, text=bittersweet!60!black]
     at ([yshift=-2mm]botbg.south)
     {(b) Test Environments};
\tikzset{colname/.style={font=\bfseries\small, inner sep=0pt}}

\coordinate (C1) at (m-1-1.center);
\coordinate (C2) at (m-1-2.center);
\coordinate (C3) at (m-1-3.center);

\node[colname, anchor=south] at ([yshift=1mm] topbg.north -| C1) {SafeDarkroom};
\node[colname, anchor=south] at ([yshift=1mm] topbg.north -| C2) {SafeDarkMujoco-Point};
\node[colname, anchor=south] at ([yshift=1mm] topbg.north -| C3) {SafeDarkMujoco-Car};

\end{tikzpicture}
\caption{\label{fig: env}
 During training, goals and obstacles are generated with a center-oriented approach, while during evaluation, they are edge-oriented. This applies consistently to both goals and obstacles. We set $\alpha = 0.5$ for generating goals and obstacles. The red color denotes the robot, the green color represents the goal location, and obstacles are depicted in shades of blue.}
\end{figure*}
\paragraph{Environments.}
We introduce the details of the environments. The environments are visualized in Figure~\ref{fig: env}.
Both environments use $\alpha = 0.5$ to generate goals and obstacles.

For SafeDarkRoom, we use a 9x9 Grid and give one reward upon reaching the goal and incur one cost when going on an obstacle cell. We terminate the episode whenever the agent reaches the goal, which results in a truly sparse reward, often referred to as \textit{DarkRoom Hard}.

For SafeDarkMujoco, a continuous state and action space counterpart to SafeDarkRoom, the agent lacks lidar information and is blind to goal and obstacle locations. Instead, it perceives its own position and rotation matrix. A sparse reward is obtained when the agent reaches the goal, terminating the episode. To enhance runtime efficiency, we employ macro actions, compressing $n$ simulation steps into a single step. For example, with $n=5$, a policy action is repeated over five internal simulation steps, reducing the default 250 simulation steps to $\frac{250}{n}$. In our experiments, we set $n=5$.

For SafeVelocity, we use the Ant and HalfCheetah environments from SafetyGym~\citep{ji2023safety}, but sample a different target velocity for each trajectory to convert them into safe ICRL benchmarks. Other aspects of the environments remain unchanged.

\begin{table*}[h]
\centering
\small
\setlength{\tabcolsep}{3pt}
\begin{tabular}{@{}>{\raggedright\arraybackslash}p{0.17\textwidth}>{\raggedright\arraybackslash}p{0.27\textwidth}>{\raggedright\arraybackslash}p{0.31\textwidth}>{\raggedright\arraybackslash}p{0.17\textwidth}@{}}
\toprule
\textbf{Domain} & \textbf{Training tasks} & \textbf{Evaluation tasks} & \textbf{Evaluation CTG targets} \\
\midrule
SafeDarkRoom & Center-oriented goals and obstacles with $\alpha=0.5$ & Edge-oriented goals and obstacles with $\alpha=0.5$ & Uniform over $[1,10]$; 100 targets \\
SafeDarkMujoco (Point, Car) & Center-oriented goals and obstacles with $\alpha=0.5$ & Edge-oriented goals and obstacles with $\alpha=0.5$ & Uniform over $[10,50]$; 100 targets \\
SafeVelocity (HalfCheetah, Ant) & Target-velocity tasks excluding the held-out multipliers & Held-out target-velocity multipliers sampled from $[0.5,1.0]$ in increments of $0.05$ & Uniform over $[0,5]$; 100 targets \\
\bottomrule
\end{tabular}
\caption{Summary of the training and evaluation setup used across domains. The CTG ranges and sample count follow the evaluation setup in Section~\ref{sec exp}; the same CTG intervals and sample counts are used as cost budgets for SafeMeta and MAML with penalty, while Safe AD uses paired RTG-CTG targets with $\mathrm{CTG}_{\max}=10,50,5$ for SafeDarkRoom, SafeDarkMujoco, and SafeVelocity, respectively.}
\label{tab:train_eval_setup}
\end{table*}

\paragraph{SCARED.}
While running Algorithm~\ref{alg: scared} for reinforcement pretraining, we resample a new environment from the training distribution described in Section~\ref{sec exp} every $K$ episodes. For each environment, a CTG is also sampled, ranging from $[1, 15]$ for SafeDarkRoom and $[10, 50]$ for SafeDarkMujoco. The remaining hyperparameters are provided in Table~\ref{tab: rp parameters}.

Our architecture follows \citet{grigsby2024amago}. We employ an MLP time-step encoder that maps each tuple $(S_t, A_t, R_t, C_t)$ to an embedding, which is then fed into a transformer-based trajectory encoder. A prediction head outputs either the action distribution (for discrete actions) or the value (for continuous actions).

\begin{table*}[h]
\centering
\begin{tabular}{|l|l|l|}
\hline
\textbf{Parameter} & \textbf{SafeDarkRoom} & \textbf{SafeDarkMujoco \& SafeVelocity} \\
\hline
$K_{\min}$, $K_{\max}$ & 50, 50 & 20, 20 \\
\hline
Episode time limit $t_{\max}$ & 30 & 75 \\
\hline
Replay buffer capacity & 100,000 & 100,000 \\
\hline
Embedding Dim & 64 & 64 \\
\hline
Hidden Dim & 64 & 64 \\
\hline
Num Layers & 4 & 4 \\
\hline
Num Heads & 8 & 8 \\
\hline
Seq Len & 1500 & 1500 \\
\hline
Attention Dropout & 0 & 0 \\
\hline
Residual Dropout & 0 & 0 \\
\hline
Embedding Dropout & 5 & 5 \\
\hline
Learning Rate & 3e-4 & 3e-4 \\
\hline
Betas & (0.9, 0.99) & (0.9, 0.99) \\
\hline
Clip Grad & 1.0 & 1.0 \\
\hline
Batch Size & 32 & 32 \\
\hline
Optimizer & Adam & Adam \\
\hline
\end{tabular}
\caption{Parameters for SCARED}
\label{tab: rp parameters}
\end{table*}

\paragraph{SafeMeta \& MAML with penalty}
  We use the original implementations by
  \citep{xu2025efficient} with 3 hidden layers of (64, 512,
  64) units for policy, value, and cost networks, totaling
  205,510 parameters, comparable to SCARED. We train for 15k
   meta-iterations, where each iteration samples 20 tasks
  from the training distribution. During meta-testing,
  both algorithms perform $K$ gradient updates to adapt to
  new tasks with unseen cost-to-go values.
  Note that while some meta-RL methods use frame stacking or recurrent architectures, their hidden states reset between episodes, making them fundamentally episodic—they do not carry information across episode boundaries as ICRL methods do.

\begin{table*}[h]
  \centering
  \begin{tabular}{|l|l|l|}
  \hline
  \textbf{Parameter} & \textbf{SafeDarkRoom} & \textbf{SafeDarkMujoco \& SafeVelocity} \\
  \hline
  $K_{\min}$, $K_{\max}$ & 50, 50 & 20, 20 \\
  \hline
  Episode time limit $t_{\max}$ & 30 & 75 \\
  \hline
  Hidden layers & (64, 512, 64) & (64, 512, 64) \\
  \hline
  Min/Max batch size & 500/1500 & 1500/1500 \\
  \hline
  Policy learning rate & 1e-3 & 1e-3 \\
  \hline
  Value/Cost learning rate & 3e-2/1e-1 & 3e-2/1e-1 \\
  \hline
  Discount factor $\gamma$ & 0.99 & 0.99 \\
  \hline
  GAE parameter $\tau$ & 0.95 & 0.95 \\
  \hline
  Max KL divergence & 1e-3 & 1e-3 \\
  \hline
  Lagrangian weight $\lambda$ & 1.0 & 1.0 \\
  \hline
  Num meta-training iterations & 15k & 15k \\
  \hline
  Policy optimizer & Adam & Adam \\
  \hline
  Value/Cost optimizer & L-BFGS & L-BFGS \\
  \hline
  \end{tabular}
  \caption{Parameters for SafeMeta and MAML with penalty}
  \label{tab:safemeta_maml_parameters}
\end{table*}

\paragraph{Safe Algorithm Distillation.}
In safe algorithm distillation, we collect a dataset $\mathcal{D} = \qty{\Xi_i}$ comprising multiple trajectories, where each trajectory $\Xi_i \doteq (\tau_1, \tau_2, \dots, \tau_K)$ represents a sequence of episodes generated by running existing safe RL algorithms on various CMDPs. Each episode $\tau_k$ in a trajectory $\Xi_i$ comprises states, actions, rewards, and costs, with the episode return $G(\tau_k) \doteq \sum_{t=1}^T R_t$ and cost-to-go $G_{c,t}(\tau_k) \doteq \sum_{i=t+1}^T C_i$, expected to increase and decrease, respectively, with $k$ as the RL algorithm learns.

We train the policy $\pi_\theta$ autoregressively using safe algorithm distillation to
distill the behavior of safe RL algorithms present in the dataset, following~\citet{laskin2023incontext}. Concretely, each
training example consists of a trajectory segment $\Xi_i$ containing state and action
pairs along with RTG and CTG conditioning. Let the
policy take as input $(S_t^k, H_t^k, G_t(\tau_k), G_{c,t}(\tau_k))$ and predict
the action at time $t$ for episode $k$.

\textbf{Discrete action spaces.}
For environments with discrete actions (e.g., SafeDarkRoom), the model outputs
categorical logits $z_\theta(\cdot)$, generating a categorical distribution
$\pi_\theta(\cdot \mid \cdot)=\mathrm{Softmax}(z_\theta(\cdot))$.
We train with the standard cross-entropy loss:
\begin{align}
\mathcal{L}_{\mathrm{disc}}(\theta)
&= \mathbb{E}_{\Xi_i \sim \mathcal{D}}
\left[
-\log \pi_\theta\!\left(A_t^k \mid S_t^k, H_t^k, G_t(\tau_k), G_{c,t}(\tau_k)\right)
\right].
\label{eq:cad_ce}
\end{align}

\textbf{Continuous action spaces.}
For environments with continuous actions (e.g., SafeDarkMujoco), the model
outputs a conditional distribution over actions parameterized by a mean
$\mu_\theta(\cdot)$. We supervise the predicted mean using an $\ell_2$ loss:
\begin{align}
\mathcal{L}_{\mathrm{cont}}(\theta)
&= \mathbb{E}_{\Xi_i \sim \mathcal{D}}
\left[
\left\| A_t^k - \mu_\theta\!\left(S_t^k, H_t^k, G_t(\tau_k), G_{c,t}(\tau_k)\right) \right\|_2^2
\right].
\label{eq:cad_l2}
\end{align}
These objectives enables the transformer to distill
constraint-aware goal seeking behaviors into its forward pass.

\textbf{Dataset Collection.}

For safe AD, we collect learning trajectories using safe RL algorithms. As our base safe RL algorithm, we employ PPO-Lagrangian~\cite{schulman2017proximal, ray2019benchmarking}, which is designed to maximize rewards while enforcing safety constraints. These trajectories capture behaviors that learn to avoid obstacles.

To introduce variation in the learned behaviors, we vary the cost limits in PPO-Lag across multiple settings. For the SafeDarkRoom environment, we use three cost limits: 0, 2.5, and 5.0. For each cost limit, we collect 50,000 steps of learning history.
For the SafeDarkMujoco and SafetyVelocity environments, prior offline RL benchmarks
and constrained decision transformer methods typically use datasets on the order
of $1$–$2$ million environment steps~\citep{fu2020d4rl, liu2023cdt}.
In our experiments, we intentionally use larger datasets of $2$–$5$ million steps.
This difference from the usual convention is motivated by the need to collect
diverse cost-related experiences in continuous control settings,
where meaningful safety-constrained behavior emerges only after long training
horizons.

In our data collection, we set a single cost limit of $0$ throughout the
full learning history. This setup intentionally
includes trajectories generating a various range of cost violations, from high-cost
exploration phases to near-zero-cost behavior as training progresses. Continuous
control tasks are substantially more difficult to solve than discrete ones, and
learning meaningful policies under safety constraints typically requires
more interaction data.

Thus, this design poses a challenge for safe supervised
in-context RL: conditioning on both return-to-go and cost-to-go induces a large
combinatorial space of reward–cost trade-offs that must be represented in the
dataset. As a result, the required dataset size grows substantially.

\textbf{Full-pipeline budget comparison.}
Safe AD requires both an offline data-collection stage and a subsequent
distillation stage, whereas SCARED trains directly through online reinforcement
pretraining. Both methods are trained to convergence.
We compare along the two budget axes that are directly comparable across the
online and offline pretraining paradigms: environment interaction steps and
parameter count. On SafeDarkRoom, Safe AD uses about 4M environment steps
including PPO-Lagrangian data collection, compared with about 1M for SCARED, and
Safe AD uses a larger transformer of about 25M parameters compared with about 6M
for SCARED. 

\paragraph{Hyper Parameters.}
We report the hyperparameters used for SCARED (Table~\ref{tab: rp parameters}) and safe AD (Table~\ref{tab: sp parameters}). SCARED adopts the AMAGO framework~\citep{grigsby2024amago} integrated with our SCARED method. Safe AD employs a constrained decision transformer as the backbone~\citep{liu2023cdt}.

\begin{table*}[h]
\centering
\begin{tabular}{|l|l|l|}
\hline
\textbf{Parameter} & \textbf{SafeDarkRoom} & \textbf{SafeDarkMujoco \& SafeVelocity} \\
\hline
Embedding Dim & 64 & 512 \\
\hline
Hidden Dim & 512 & 256 \\
\hline
Num Layers & 8 & 8 \\
\hline
Num Heads & 8 & 8 \\
\hline
Seq Len & 100 & 200 \\
\hline
Attention Dropout & 0.5 & 0.5 \\
\hline
Residual Dropout & 0.1 & 0.1 \\
\hline
Embedding Dropout & 0.3 & 0.3 \\
\hline
Learning Rate & 3e-4 & 3e-4 \\
\hline
Betas & (0.9, 0.99) & (0.9, 0.99) \\
\hline
Clip Grad & 1.0 & 1.0 \\
\hline
Batch Size & 512 & 128 \\
\hline
Optimizer & Adam & Adam \\
\hline
\end{tabular}
\caption{Parameters for Safe Algorithm Distillation}
\label{tab: sp parameters}
\end{table*}

\section{Variation of Safe Algorithm Distillation}
\label{apx: ad vs ad-eps}
\begin{figure}[H]
    \centering
    \includegraphics[width=0.75\linewidth]{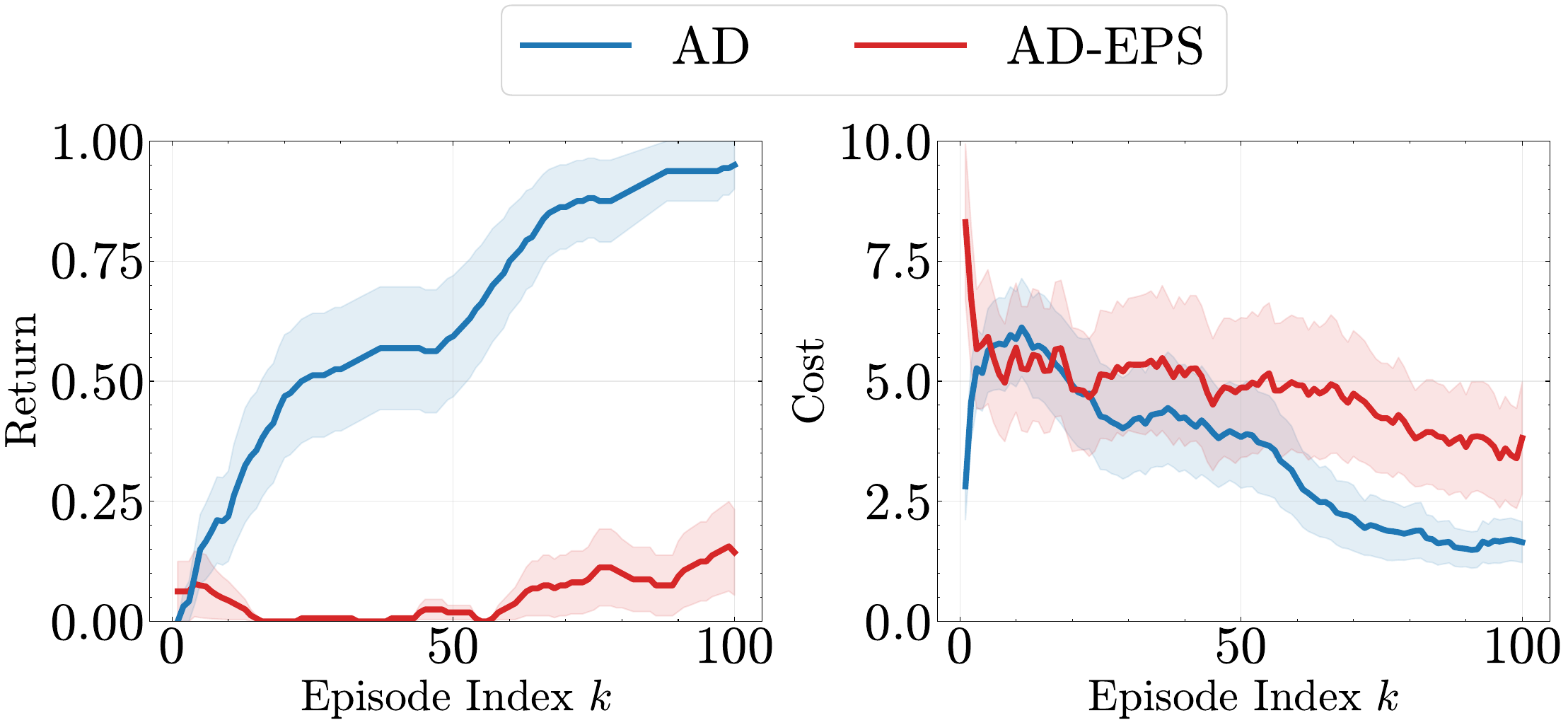}
    \caption{Performance comparison between algorithm distillation and
    algorithm distillation with noise in SafeDarkRoom.
    AD-EPS fails to generalize to out-of-distribution (OOD) environments.}
    \label{fig: ad vs ad-eps}
\end{figure}
In this section, we compare safe algorithm distillation (AD)~\citep{laskin2023incontext}
and safe algorithm distillation with noise (AD-EPS)~\citep{zisman2024emergence} in
SafeDarkRoom environment.
AD-EPS aims to learn in-context RL algorithms by training on datasets generated
from a single optimal policy with injected action noise, enabling efficient
collection of learning trajectories.

However, in environments with safety constraints and explicit cost signals, we
find that AD-EPS struggles to generalize.
In SafeDarkRoom, the perturbed trajectories produced by AD-EPS fail to capture
meaningful obstacle-avoidance behavior, as the injected noise primarily induces
random action variations rather than structured exploration.
As a result, AD-EPS relies on artificial trajectories that do not reflect the
true learning dynamics required for effective in-context RL, leading to poor
generalization in out-of-distribution settings.

\section{Ablation Studies}\label{apx: main_abl}
In this section, we present our ablation studies on SCARED and safe AD, examining shared factors such as context length and model size, as well as specific factors like dataset size for safe AD. Our findings reveal distinct sensitivities to these factors.

SCARED remains largely unaffected by model size, whereas safe AD performance is significantly influenced by model size (See Figures \ref{fig:rl2_abl}.(b) and \ref{fig:sp_abl}.(b)). Regarding context length, SCARED benefits from longer sequences, while safe AD shows degraded performance with longer contexts. Conversely, safe AD performs better with shorter context lengths, where SCARED struggles (See Figures \ref{fig:rl2_abl}.(a) and \ref{fig:sp_abl}.(a)). These results indicate that learning long-term credit assignment is more challenging in offline reinforcement learning due to dataset constraints, whereas online learning, with access to environment interactions, manages long-term credit assignment more effectively. We also confirm the well-established finding that safe AD is highly sensitive to dataset size, with models failing to learn and exhibiting random behavior on out-of-distribution data when the dataset is small (See Figure \ref{fig:sp_abl}.(c)).

\begin{figure*}
\centering
\setlength{\tabcolsep}{4pt}

\begin{tabular}{
  @{}
  >{\centering\arraybackslash}m{.315\textwidth}
  !{\vrule width 0.6pt}
  >{\centering\arraybackslash}m{.315\textwidth}
  !{\vrule width 0.6pt}
  >{\centering\arraybackslash}m{.315\textwidth}
  @{}
}
  \subcaptionbox{Context Length\label{fig:abl rl2 col1}}[.315\textwidth]{%
    \includegraphics[width=\linewidth]{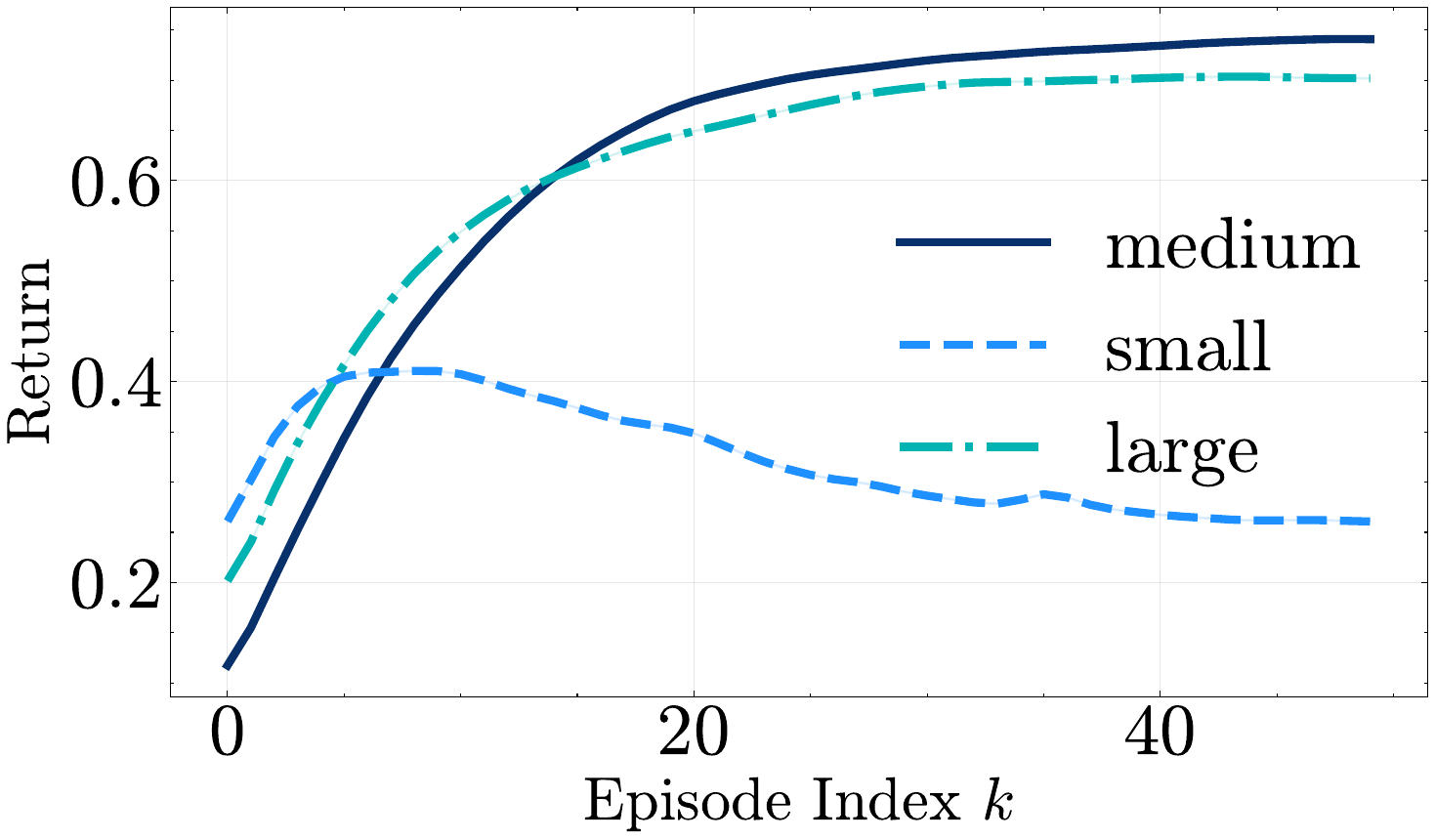}\par\vspace{4pt}
    \includegraphics[width=\linewidth]{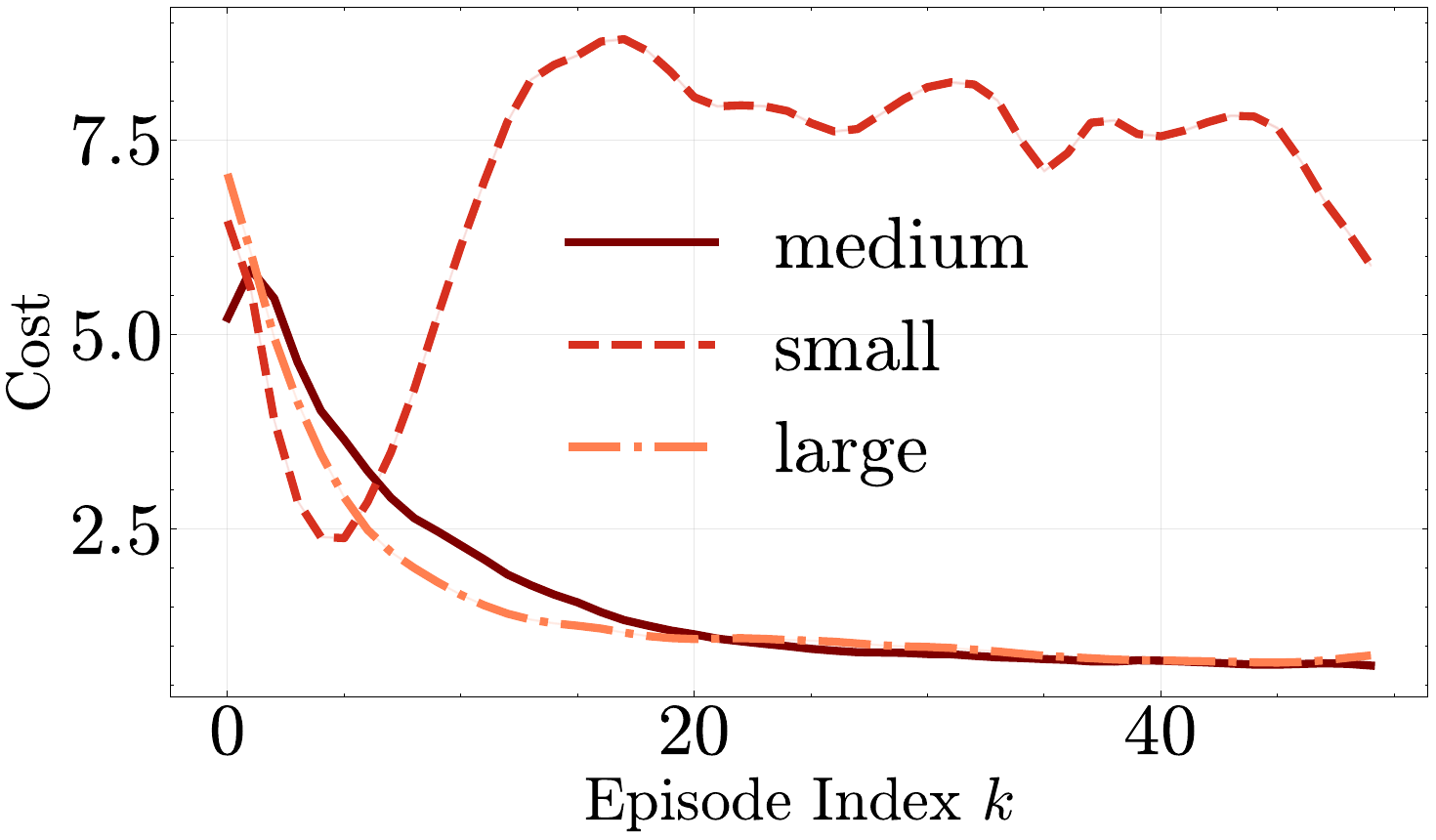}%
  }
  &
  \subcaptionbox{Model Size\label{fig:abl rl2 col2}}[.315\textwidth]{%
    \includegraphics[width=\linewidth]{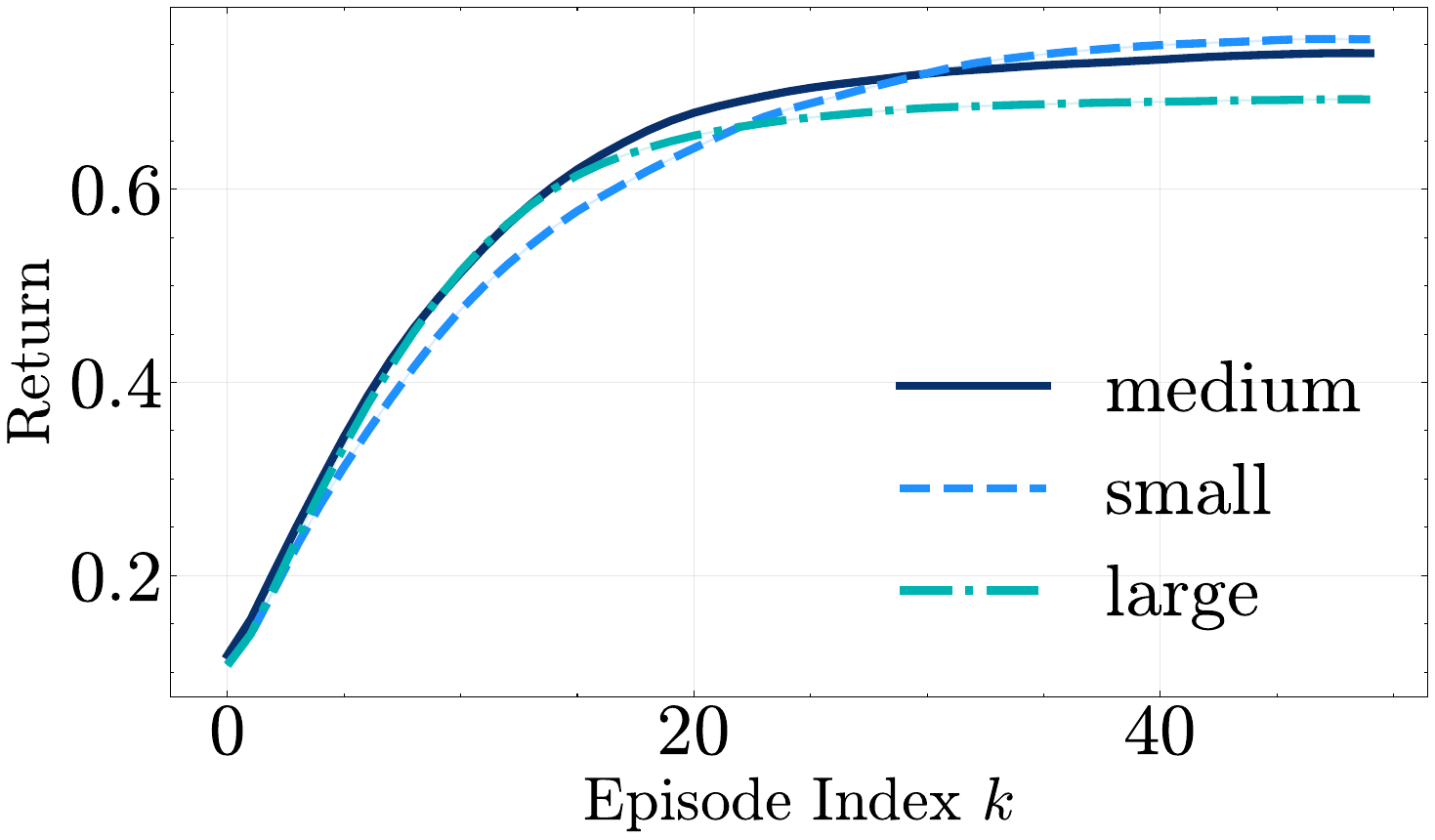}\par\vspace{4pt}
    \includegraphics[width=\linewidth]{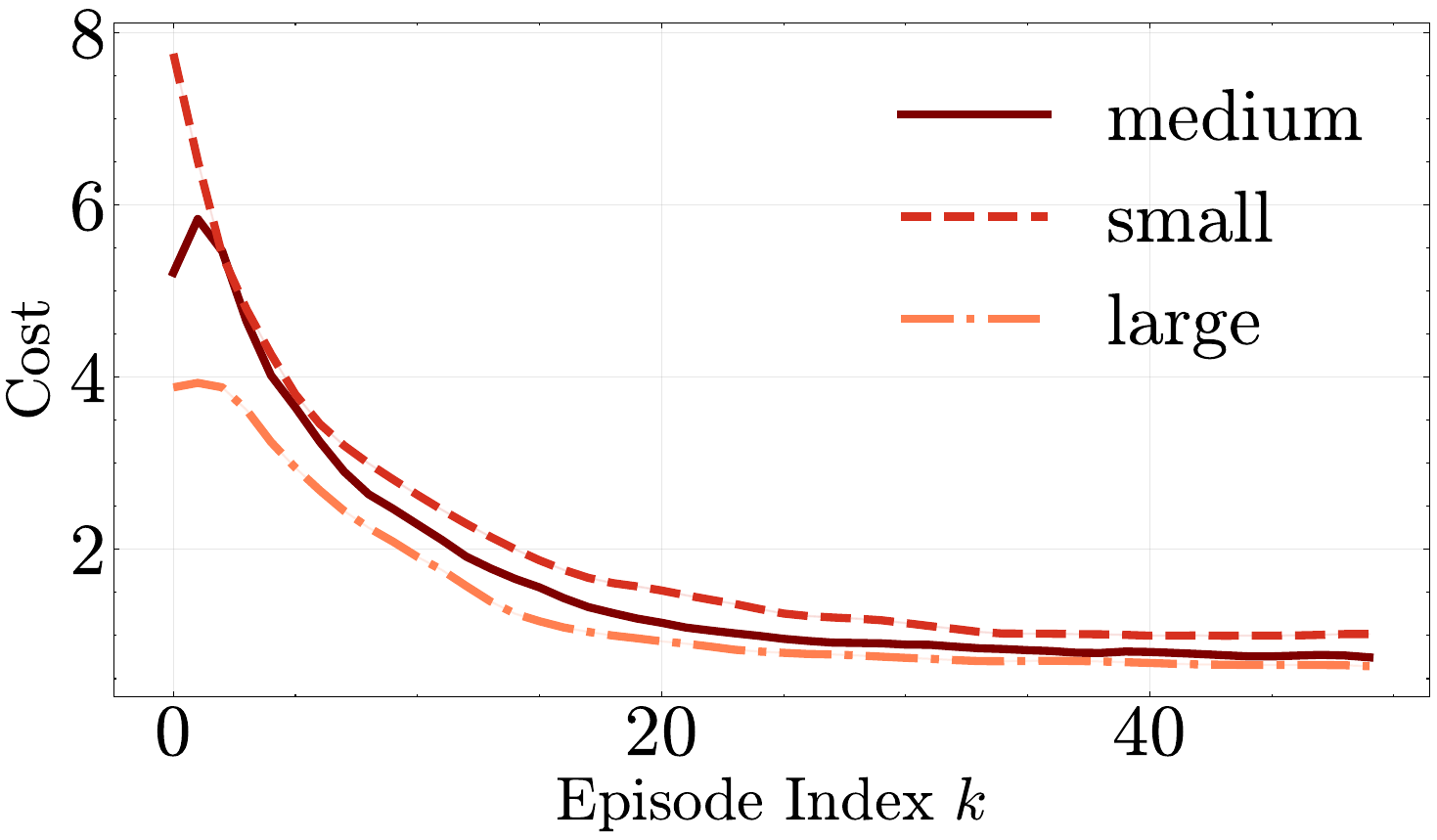}%
  }
  &
  \subcaptionbox{Optimization\label{fig:abl rl2 col3}}[.315\textwidth]{%
    \includegraphics[width=\linewidth]{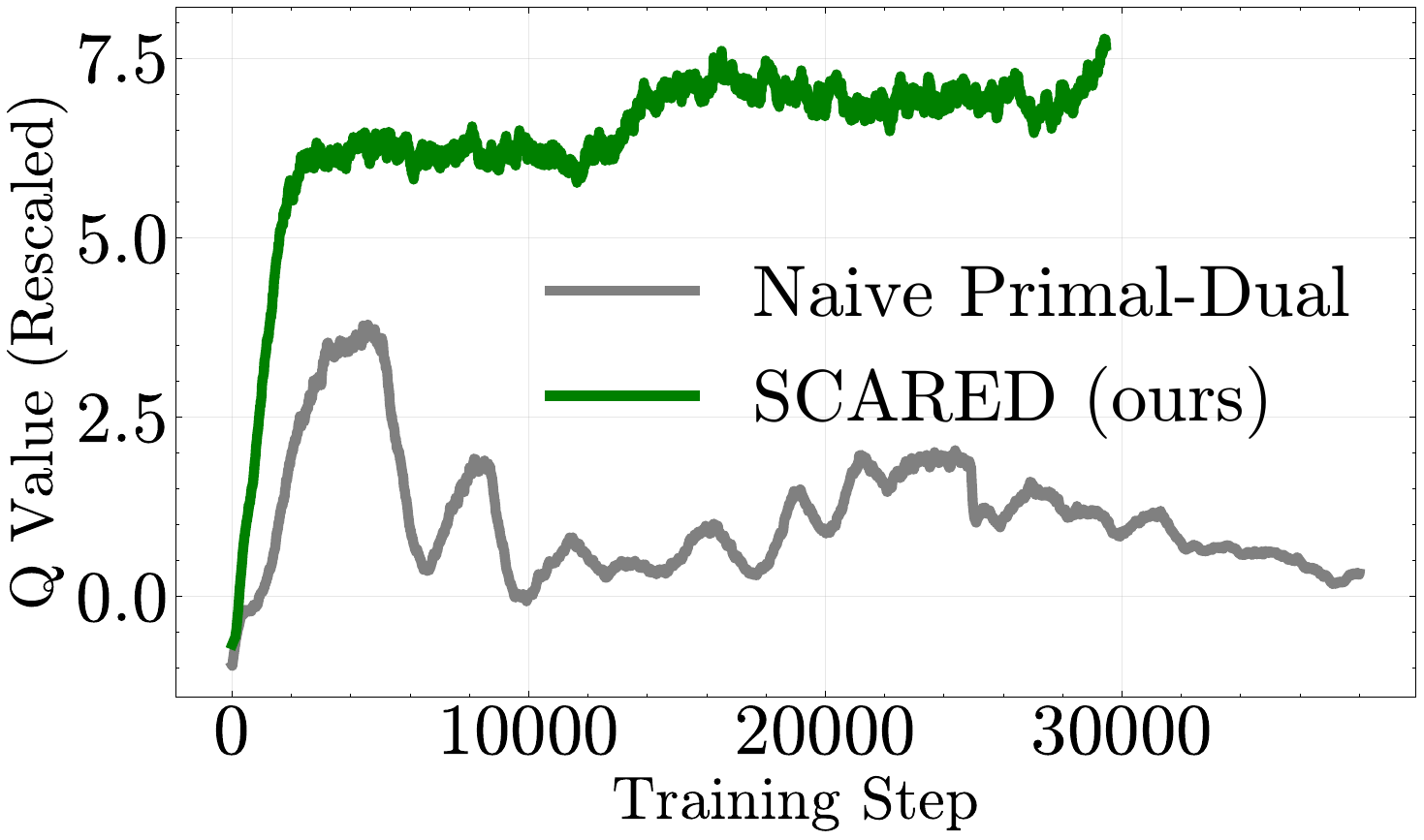}\par\vspace{4pt}
    \includegraphics[width=\linewidth]{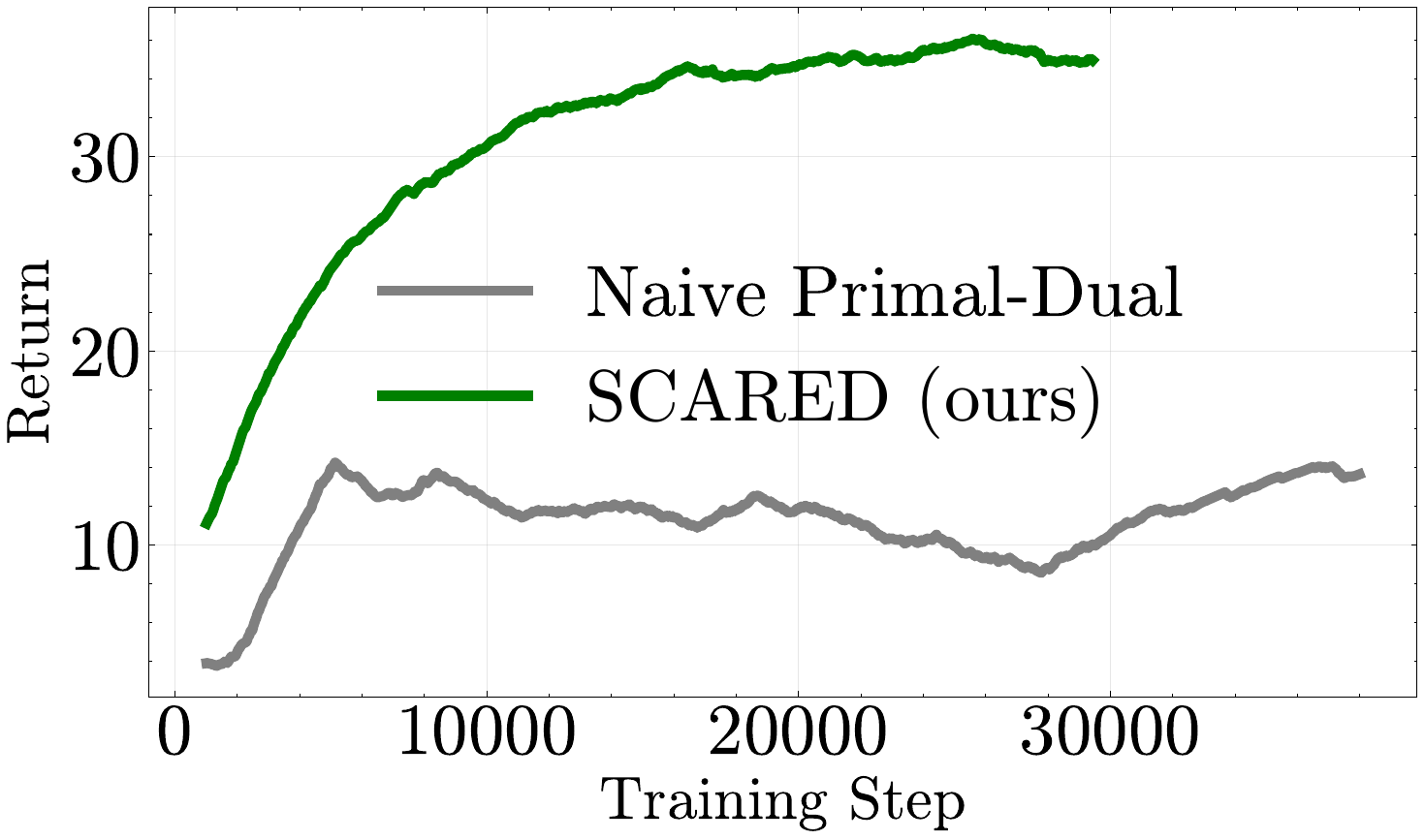}%
  }
\end{tabular}

\caption{\textbf{Ablations of SCARED on SafeDarkRoom. }
\textbf{(a), (b)}
The evaluation is set up similarly to Question (i).
For context length, we compare 150 and 3000 against the base value of 1500.
For model size, we compare embedding dimensions 32 and 128 against the base of 64.
\textbf{(c)}
At each training step, the average total return of 50 episodes across 10 random test environments, and the average Q Value across 50 episodes of 250 random train environments are plotted.
Our SCARED optimization is easier to tune and more stable to train than the naive primal-optimal method.
}
\label{fig:rl2_abl}
\end{figure*}

\begin{figure*}
\centering
\setlength{\tabcolsep}{4pt}

\begin{tabular}{
  @{}
  >{\centering\arraybackslash}m{.315\textwidth}
  !{\vrule width 0.6pt}
  >{\centering\arraybackslash}m{.315\textwidth}
  !{\vrule width 0.6pt}
  >{\centering\arraybackslash}m{.315\textwidth}
  @{}
}
  \subcaptionbox{Context Length\label{fig:col1}}[.315\textwidth]{%
    \includegraphics[width=\linewidth]{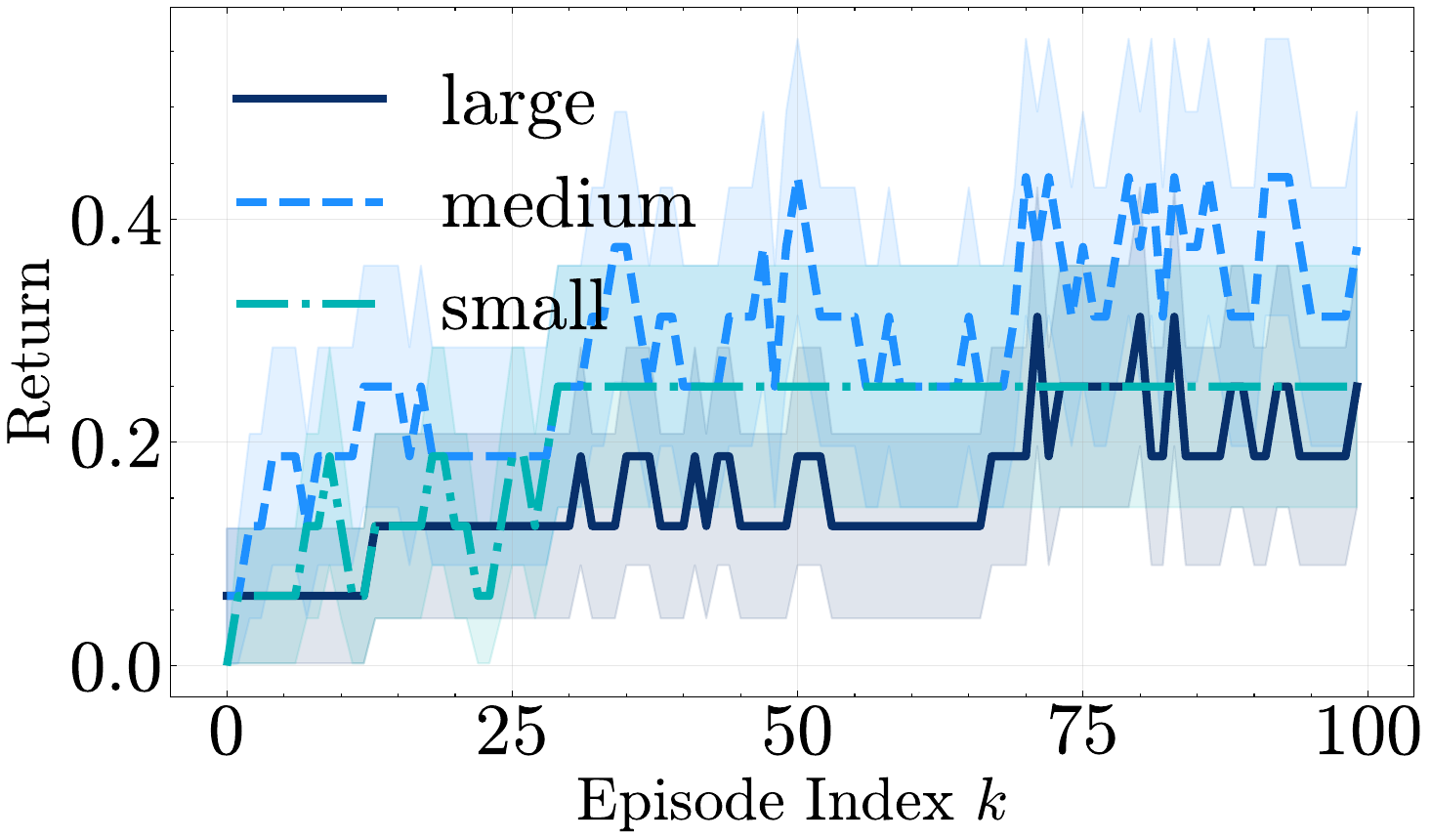}\par\vspace{4pt}
    \includegraphics[width=\linewidth]{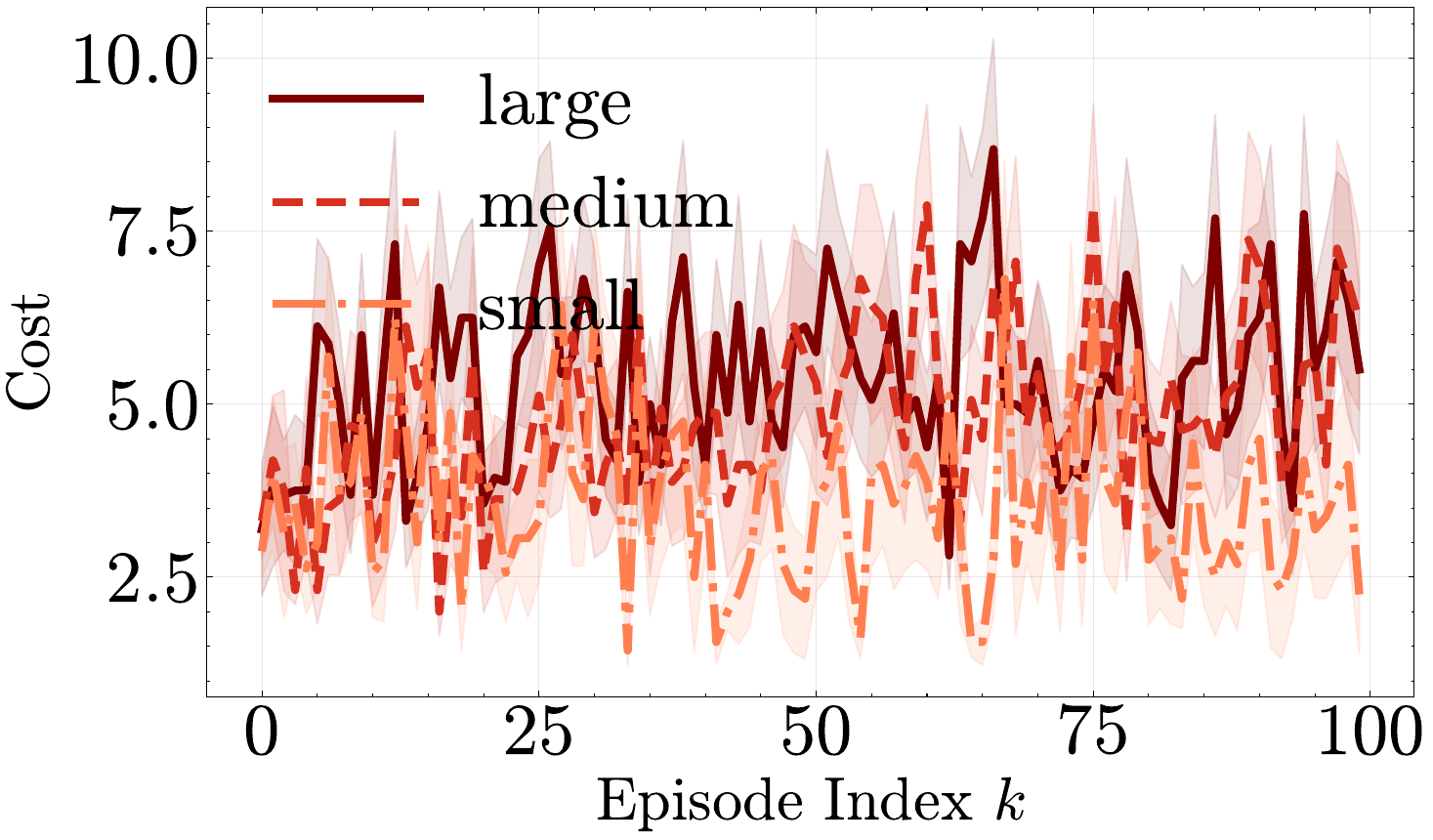}%
  }
  &
  \subcaptionbox{Model Size\label{fig:col2}}[.315\textwidth]{%
    \includegraphics[width=\linewidth]{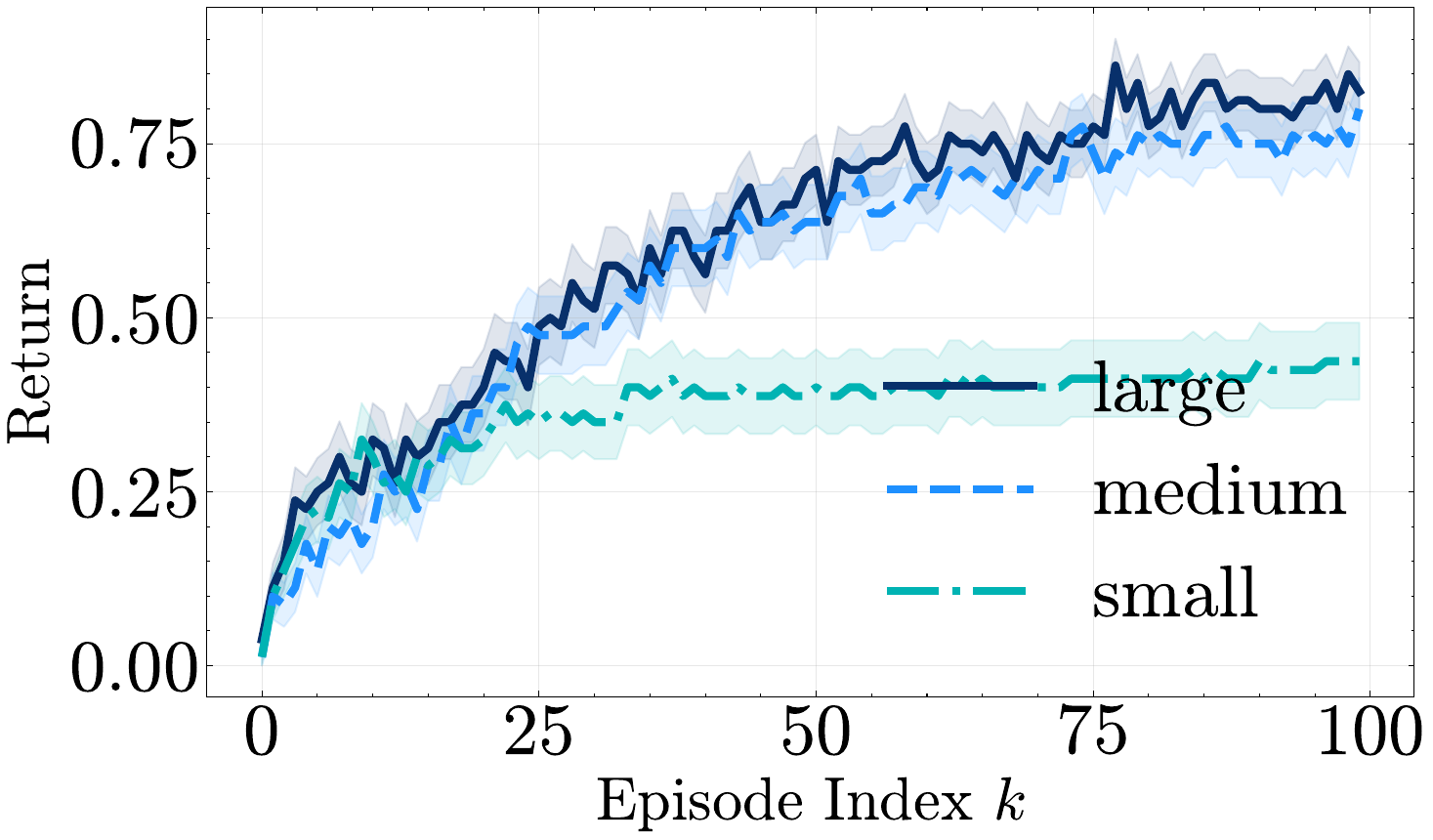}\par\vspace{4pt}
    \includegraphics[width=\linewidth]{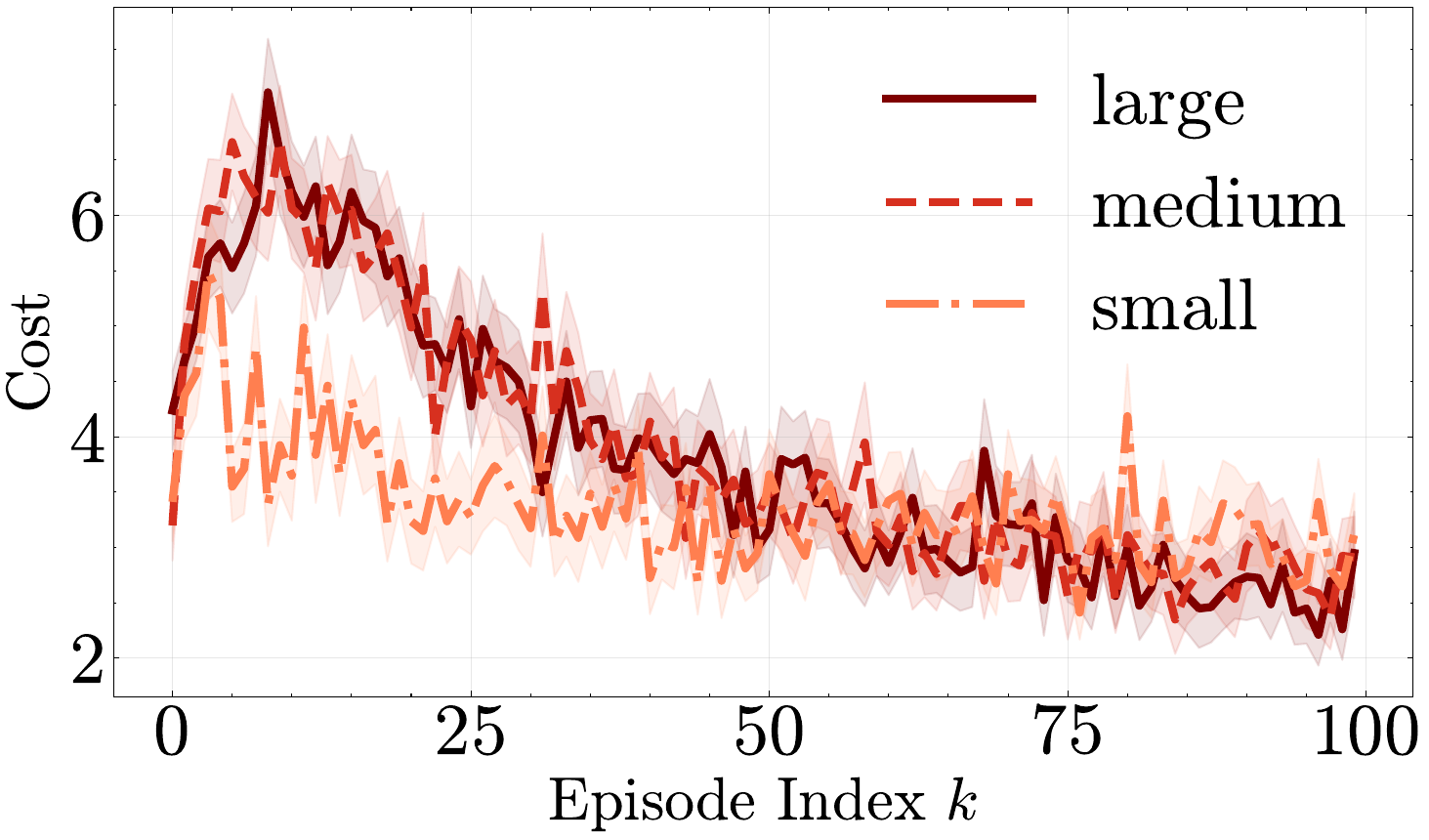}%
  }
  &
  \subcaptionbox{Dataset Size\label{fig:col3}}[.315\textwidth]{%
    \includegraphics[width=\linewidth]{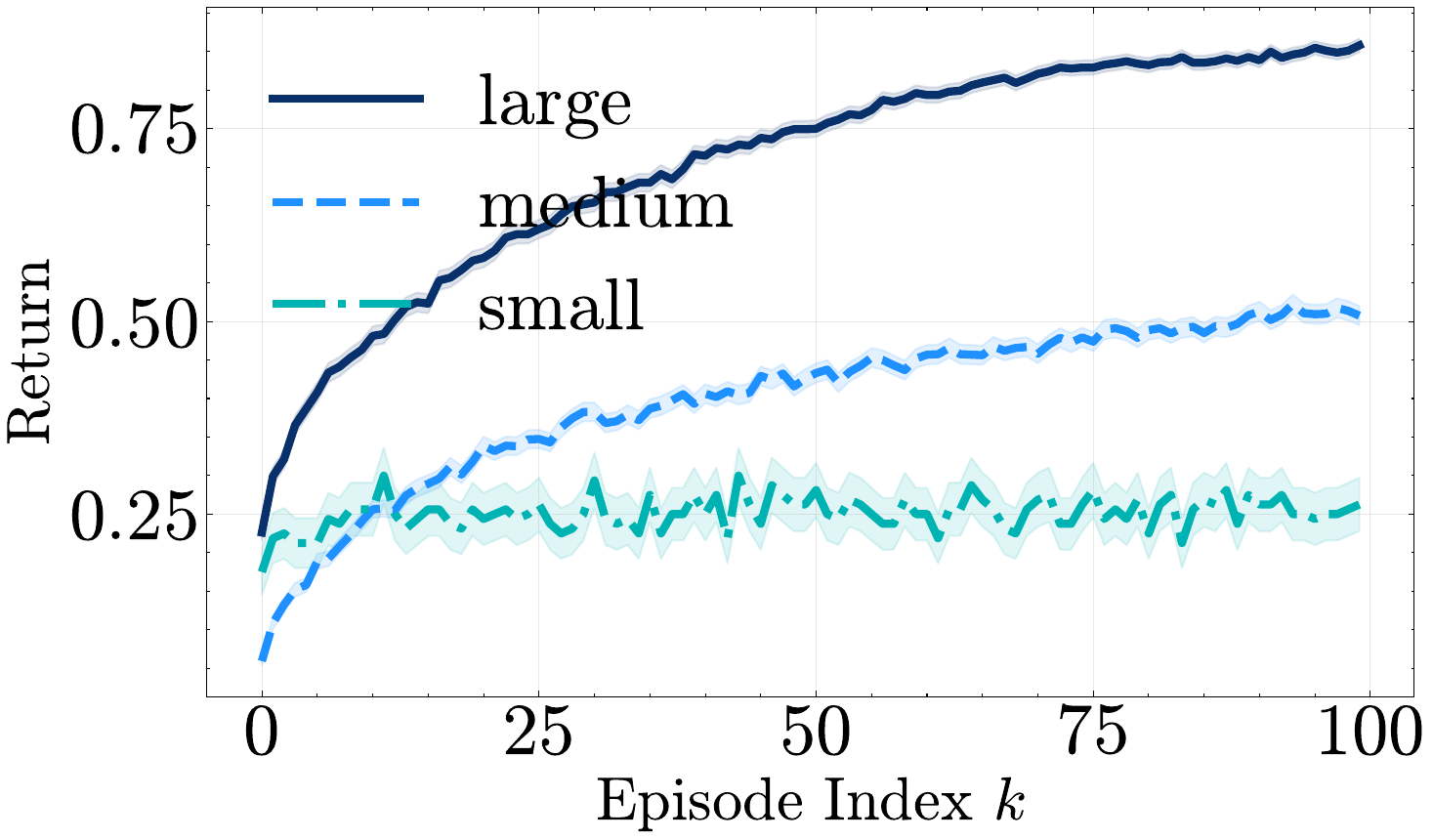}\par\vspace{4pt}
    \includegraphics[width=\linewidth]{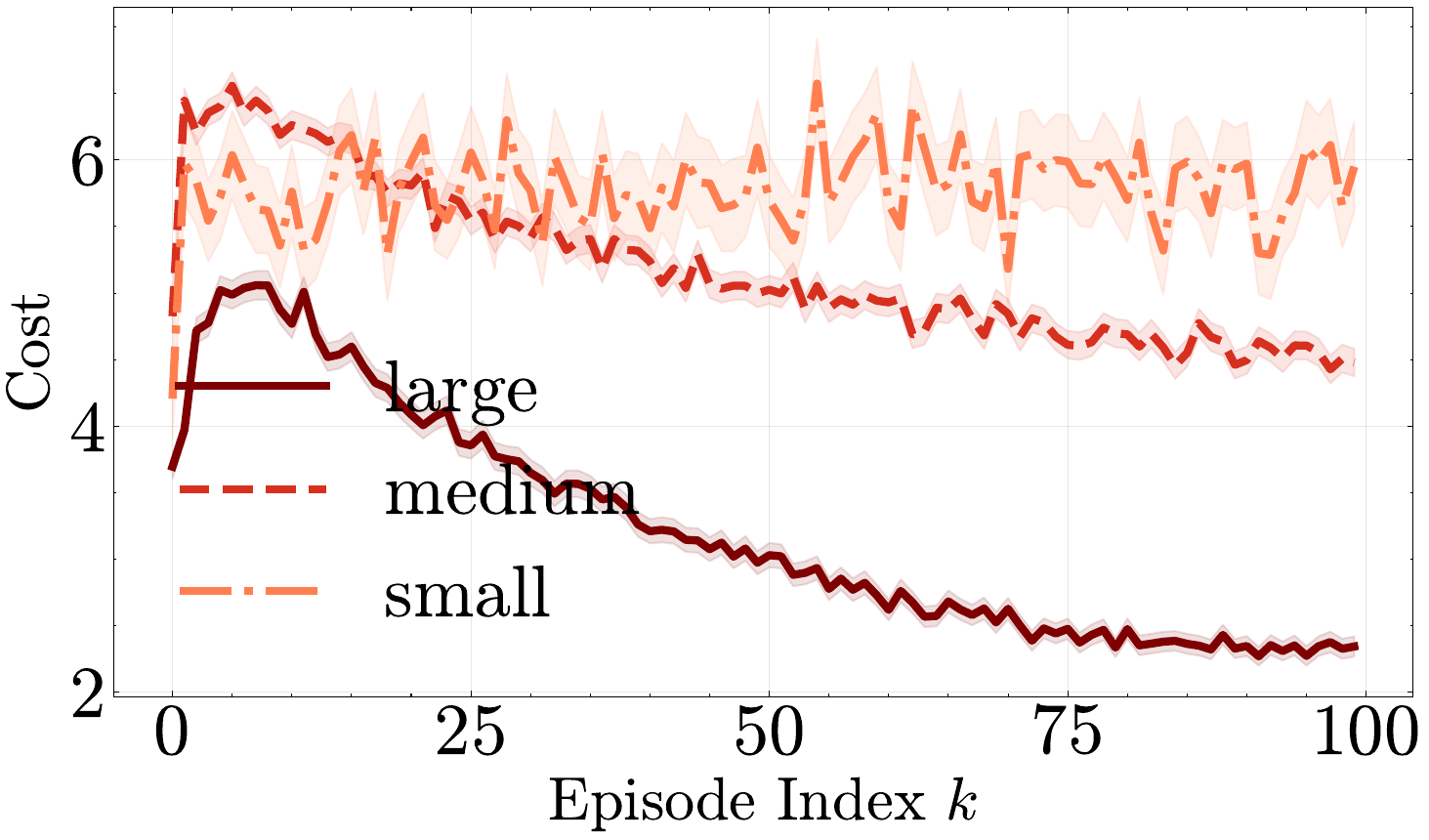}%
  }
\end{tabular}

\caption{\textbf{Ablations of Safe Algorithm Distillation on SafeDarkRoom.} The evaluation is set up similarly to Question (i). For context length, we use a smaller base model and test three sequence lengths: 100, 500, and 1000. For dataset size, large refers to the full dataset, medium uses 50\% of the dataset, and small uses 5\% of the original dataset. Model size increases with the number of hidden layers: 2, 4, and 8, keeping other factors constant.}

\label{fig:sp_abl}
\end{figure*}

\end{document}